\tikzset{
  shift left/.style ={commutative diagrams/shift left={#1}},
  shift right/.style={commutative diagrams/shift right={#1}}
}
\definecolor{high}{gray}{0.3}  
\definecolor{low}{gray}{0.9}   
\newtheorem{thm}{Theorem}[section]
\newtheorem{prop}{Proposition}[section]
\newtheorem{dfn}{Definition}
\newtheorem{rk}{Remark}
\newcommand{\vect}[1]{\boldsymbol{#1}}
\newcommand{\tp}[1]{{#1}^{\mathsf T}}
\newcommand{\tr}{\mathrm{tr}}
\newcommand{\VEC}{\mathrm{vec}}
\DeclareMathAlphabet\mathbfcal{OMS}{cmsy}{b}{n}
\DeclareMathOperator{\diag}{diag}
\newcommand{\half}{\frac12}
\newcommand{\E}{\mathrm E}
\newcommand{\mCv}{\mathrm{Cov}}
\newcommand{\KL}{\mathrm{KL}}
\newcommand{\eps}{\epsilon}
\renewcommand{\eps}{\varepsilon}
\renewcommand{\epsilon}{\varepsilon}
\renewcommand{\Sigma}{\varSigma}
\newcommand{\bmu}{\vect\mu}
\newcommand{\bgamma}{{\boldsymbol{\gamma}}}
\newcommand{\bSigma}{\vect\Sigma}
\newcommand{\bV}{{\bf V}}
\newcommand{\bW}{{\bf W}}
\newcommand{\bzero}{{\bf 0}}
\newcommand{\bu}{{\bf u}}
\newcommand{\bU}{{\bf U}}
\newcommand{\by}{{\bf y}}
\newcommand{\bY}{{\bf Y}}
\newcommand{\bx}{{\bf x}}
\newcommand{\bX}{{\bf X}}
\newcommand{\bz}{{\bf z}}
\newcommand{\bZ}{{\bf Z}}
\newcommand{\bM}{{\bf M}}
\newcommand{\bC}{{\bf C}}
\newcommand{\bI}{{\bf I}}
\newcommand{\bL}{{\bf L}}
\newcommand{\bK}{{\bf K}}
\newcommand{\bF}{{\bf F}}
\newcommand{\bS}{{\bf S}}
\newcommand{\mC}{{\mathcal C}}
\newcommand{\mH}{\mathcal{H}}
\newcommand{\mI}{{\mathcal I}}
\newcommand{\mL}{{\mathcal L}}
\newcommand{\mN}{{\mathcal N}}
\newcommand{\mMN}{\mathcal{MN}}
\newcommand{\mQ}{{\mathcal Q}}
\newcommand{\mS}{{\mathcal S}}
\newcommand{\mD}{{\mathcal D}}
\newcommand{\mbR}{{\mathbb R}}
\newcommand{\qED}{\mathrm{q}\!-\!\mathrm{ED}}
\newcommand{\sqED}{\mathrm{q}\!-\!\mathrm{ED}^*}
\newcommand{\qEP}{\mathrm{q}\!-\!\mathcal{EP}}
\newcommand{\ep}{\mathrm{EP}}
\newcommand{\GP}{\mathcal{GP}}
\title{Deep Q-Exponential Processes}
\author{Zhi Chang \quad Chukwudi Obite \quad Shuang Zhou \quad Shiwei Lan\thanks{slan@asu.edu} \\
School of Mathematical \& Statistical Sciences, Arizona State University \\ 
901 S Palm Walk, Tempe, AZ 85287, USA
}
\begin{document}

\maketitle

\begin{abstract}
Motivated by deep neural networks, the deep Gaussian process (DGP) generalizes the standard GP by stacking multiple layers of GPs. Despite the enhanced expressiveness, GP, as an $L_2$ regularization prior, tends to be over-smooth and sub-optimal for inhomogeneous subjects, such as images with edges. Recently, Q-exponential process (Q-EP) has been proposed as an $L_q$ relaxation to GP and demonstrated with more desirable regularization properties through a parameter $q>0$ with $q=2$ corresponding to GP. Sharing the similar tractability of posterior and predictive distributions with GP, Q-EP can also be stacked to improve its modeling flexibility. In this paper, we generalize Q-EP to deep Q-EP to enjoy both proper regularization and improved expressiveness. The generalization is realized by introducing shallow Q-EP as a latent variable model and then building a hierarchy of the shallow Q-EP layers. Sparse approximation by inducing points and scalable variational strategy are applied to facilitate the inference. We demonstrate the numerical advantages of the proposed deep Q-EP model by comparing with multiple state-of-the-art deep probabilistic models.
\end{abstract}

{\bf Keywords:} Deep Models, Inhomogeneous Subjects, Regularization, Latent Representation, Model Expressiveness

\section{Introduction}
Gaussian process \citep[GP][]{Rasmussen_2005,Bernardo_1998} has gained enormous successes and been widely used in statistics and machine learning community. With its flexibility in learning functional relationships \citep{Rasmussen_2005} and latent representations \citep{Titsias_2010}, and capability in tractable uncertainty quantification, GP has become one of the most popular non-parametric modeling tools. Facilitated by the sparse approximation \citep{Titsias_2009} and scalable variational inferences \citep[SVGP][]{Hensman_2015, Salimbeni_2017}, GP has been popularized for a variety of big learning tasks. \cite{Neal_1996} in his seminal work discovered that Bayesian neural networks with infinite width converged to GP with certain kernel function. Inspired by the advancement of deep learning \citep{Goodfellow-et-al-2016}, \cite{Damianou_2013} pioneered in generalizing GP with deep structures, hence named deep GP. Ever since then, there has been a large volume of follow-up works including 
deep convolutional GP \citep{Blomqvist_2020},
deep sigma point process \citep[DSPP][]{Jankowiak_2020b}, deep image prior \citep{Ulyanov2020}, deep kernel process \citep{Aitchison_2021}, deep variational implicit process \citep{ortega_2023}, deep horseshoe GP \citep{Castillo_2024}, and various applications \citep{Dutordoir_2020,Li_2021,Jones_2023}.

Despite its flexibility, GP, as an $L_2$ regularization method, tends to produce random candidate functions that are over-smooth and thus sub-optimal for modeling inhomogeneous objects with abrupt changes or sharp contrast. Recently, an $L_q$ based stochastic process, $Q$-exponential process \citep[Q-EP][]{Li_2023}, has been proposed to impose flexible regularization through a parameter $q>0$, which includes GP as a special case when $q=2$. Different from other $L_1$ based priors such as Laplace random field \citep{Podg_rski_2011, KOZUBOWSKI_2013} and Besov process \citep{Lassas_2009, Dashti_2012}, Q-EP shares with GP the unique tractability of posterior and predictive distributions \citep[Theorem 3.5 of][]{Li_2023}, which essentially permits a deep generalization by stacking multiple associated stochastic mappings \citep{Damianou_2013}.

Motivated by the enhanced expressiveness of deep GP and the flexible regularization of Q-EP, in this work we generalize Q-EP to \emph{deep Q-EP} to enjoy both merits. On one hand, by stacking multiple layers of Q-EP mappings, deep Q-EP becomes more capable of characterizing complex latent representations than the standard Q-EP. On the other hand, inherited from Q-EP, deep Q-EP maintains the control of regularization through the parameter $q>0$, whose smaller values impose stronger regularization, more amenable than (deep) GP to preserve inhomogenous traits such as edges in an image.
First, we introduce the building block, shallow Q-EP model, which can be regarded as a kernelized latent variable model \citep{Lawrence_2003, Titsias_2010}.
Such shallow model is also viewed as a stochastic mapping $F$ from input (or latent) variables $X$ to output variables $Y$ defined by a kernel. Then as in \cite{Lawrence_2007, Damianou_2013}, we extend such mapping by stacking multiple shallow Q-EP layers to form a hierarchy for the deep Q-EP. Sparse approximation by inducing points \citep{Titsias_2009} is adopted for the variational inference of deep Q-EP. Unlike the original deep GP \citep{Damianou_2013} relying variational calculus to calculate the variational distribution for the function values on inducing points, we adopt the two-stage strategy in \cite{Hensman_2015} that computes the variational distribution of $F$, which is more succinct and scalable. Though not straightforward in the setting of Q-EP, tractable evidence lower bound (ELBO) for the logarithm of model evidence can be obtained with the help of Jensen's inequality. The inference procedure, as in deep GP, can be efficiently implemented in \texttt{GPyTorch} \citep{Gardner_2018}.

\paragraph{Connection to existing works} 
Our proposed deep Q-EP is closely related to deep GP \citep{Damianou_2013} and two other works, deep kernel learning \citep[DKL-GP][]{Wilson_2016} and DSPP \citep{Jankowiak_2020b}. Deep Q-EP generalizes deep GP with a parameter $q>0$ to control the regularization (See Figure \ref{fig:LVM} for its effect on learning representations) and includes deep GP as a special case for $q=2$. DKL-GP combines the deep learning architectures (neural networks) with the non-parametric flexibility of kernel methods (GP). The GP part can also be replaced by Q-EP to evolve to new methods like DKL-QEP (See Section \ref{sec:imgcls}.) DSPP is motivated by parametric GP models \citep[PPGPR][]{Jankowiak_2020a} and applies sigma point approximation or quadrature-like integration to the predictive distribution. The majority of popular deep probabilistic models rely on GP. This is one of the few developed out of a non-Gaussian stochastic process.
Our proposed work on deep Q-EP has multi-fold contributions to deep probabilistic models:
\begin{enumerate}
    \item We propose a novel deep probabilistic model based on Q-EP that generalizes deep GP with flexibility of regularization.
    \item We develop the variational inference for deep Q-EP and efficiently implement it.
    \item We demonstrate numerical advantages of deep Q-EP in handling data inhomogeneity by comparing with state-of-the-art deep probabilistic models.
\end{enumerate}

The rest of the paper is organized as follows. Section \ref{sec:QEP} introduces the background of Q-EP. We then develop shallow Q-EP in Section \ref{sec:QEP-LVM} as the building block for deep Q-EP in Section \ref{sec:deep-QEP}. In these two sections, we highlight the importance of posterior tractability in the development and some obstacles in deriving the variational lower bounds. In Section \ref{sec:numerics} we demonstrate the numerical advantages by comparing with multiple deep probabilistic models in various learning tasks. Finally, we conclude with some discussion on the limitation and potential improvement in Section \ref{sec:conclusion}.

\section{Background: $Q$-exponential Processes}\label{sec:QEP}
\subsection{Multivariate $Q$-exponential Distribution}
Based on $L_q$ regularization, the univariate \emph{$q$-exponential distribution} \citep{Dashti_2012} with an inexact density (not normalized to 1), $\pi_q(u) \propto \exp{(- \half|u|^q)}$, 
is one of the following exponential power (EP) distributions $\ep(\mu, \sigma, q)$ with $\mu=0$, $\sigma=1$:
\begin{equation*}\label{eq:epd}
    p(u|\mu, \sigma, q) = \frac{q}{2^{1+1/q}\sigma\Gamma(1/q)}\exp\left\{-\half \left|\frac{u-\mu}{\sigma}\right|^q\right\} .
\end{equation*}
This family includes normal distribution $\mN(\mu,\sigma^2)$ for $q=2$ and Laplace distribution $L(\mu, b)$ with $\sigma=2^{-1/q} b$ for $q=1$ as special cases.

\cite{Li_2023} generalizes the univariate $q$-exponential random variable to a multivariate random vector on which a stochastic process can be defined with two requirements by the Kolmogorov' extension theorem \citep{Oksendal_2003}: i) {\bf exchangeability} of the joint distribution, i.e. $p(\bu_{1:N}) = p(\bu_{\tau(1:N)})$ for any finite permutation $\tau$; and ii) {\bf consistency} of marginalization, i.e. $p(\bu_1) = \int p(\bu_1, \bu_2)d\bu_2$.

Suppose a function $u(x)$ is observed at $N$ locations, $x_1,\cdots,x_N\in \mD\subset\mbR^{d}$. 
\cite{Li_2023} find a consistent generalization, named \emph{multivariate $q$-exponential distribution},
for $\bu=(u(x_1),\cdots, u(x_N))$ from the family of elliptic contour distributions \citep{Johnson_1987,fang1990generalized}.
\begin{dfn}\label{dfn:qED}
    A multivariate $q$-exponential distribution for a random vector $\bu\in\mbR^N$, denoted as $\qED_N(\bmu, \bC)$, has the following density
    \begin{equation}\label{eq:qED}
        p(\bu|\bmu, \bC, q) = \frac{q}{2} (2\pi)^{-\frac{N}{2}} |\bC|^{-\half} r^{(\frac{q}{2}-1)\frac{N}{2}} \exp\left\{-\frac{r^\frac{q}{2}}{2}\right\}, \quad r(\bu) = \tp{(\bu-\bmu)} \bC^{-1} (\bu-\bmu).
    \end{equation}
\end{dfn}
The following proposition describes the role of matrix $\bC$ in characterizing the covariance between the components \citep{Li_2023}.
\begin{prop}\label{prop:qED_cov}
    If $\bu\sim \qED_N(\bmu, \bC)$, then we have
    \begin{equation*}
        \E[\bu] = \bmu, \qquad \mCv(\bu) = \frac{2^{\frac{2}{q}}\Gamma(\frac{N}{2}+\frac{2}{q})}{N\Gamma(\frac{N}{2})} \bC \overset{\cdot}{\sim} N^{\frac{2}{q}-1} \bC, \quad as\quad N\to \infty.
    \end{equation*}
\end{prop}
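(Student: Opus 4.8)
The plan is to exploit the fact that $\qED_N(\bmu,\bC)$ is an elliptically contoured distribution, so that a random vector $\bu\sim\qED_N(\bmu,\bC)$ admits the stochastic representation $\bu=\bmu+\bC^{1/2}\bz$, where $\bC^{1/2}$ is a symmetric square root of $\bC$. Changing variables in \eqref{eq:qED}, the Jacobian $|\bC|^{1/2}$ cancels the $|\bC|^{-1/2}$ prefactor and, since $\tp{(\bu-\bmu)}\bC^{-1}(\bu-\bmu)=\tp{\bz}\bz$, one is left with
\[
 p(\bz)\;\propto\;\bigl(\tp{\bz}\bz\bigr)^{(\frac q2-1)\frac N2}\exp\Bigl\{-\half\bigl(\tp{\bz}\bz\bigr)^{q/2}\Bigr\},
\]
which depends on $\bz$ only through $\|\bz\|^2$ and is therefore spherically symmetric. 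Hence $\bz$ and $-\bz$ are equal in distribution, so, once integrability is checked, $\E[\bz]=\bzero$ and thus $\E[\bu]=\bmu$; moreover $\mCv(\bu)=\bC^{1/2}\mCv(\bz)\bC^{1/2}$, and it remains only to compute $\mCv(\bz)$.

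For that I would pass to polar coordinates $\bz=\rho\bS$, with $\rho=\|\bz\|\ge 0$ and $\bS$ uniform on the unit sphere in $\mbR^N$, independent of $\rho$. Orthogonal invariance forces $\E[\bS\tp{\bS}]=c\bI_N$, and taking traces gives $c=1/N$, so $\mCv(\bz)=\tfrac{\E[\rho^2]}{N}\bI_N$. Integrating $p(\bz)$ over the sphere using $d\bz=\rho^{N-1}\,d\rho\,dS$ together with the surface area $2\pi^{N/2}/\Gamma(N/2)$, and collecting the powers of $\rho$, yields the radial density
\[
 p(\rho)=\frac{q}{2^{N/2}\,\Gamma(N/2)}\,\rho^{qN/2-1}e^{-\rho^q/2},\qquad \rho>0 ,
\]
the normalizing constant being confirmed by the substitution $t=\rho^q/2$, which turns $\int_0^\infty p(\rho)\,d\rho$ into $\Gamma(N/2)/\Gamma(N/2)=1$. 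The same substitution in $\E[\rho^2]=\int_0^\infty\rho^2 p(\rho)\,d\rho$ reduces it to a Gamma integral and gives $\E[\rho^2]=2^{2/q}\Gamma(N/2+2/q)/\Gamma(N/2)$ (and likewise $\E[\rho]<\infty$, which is the integrability invoked above for the mean). Substituting back, $\mCv(\bu)=\dfrac{2^{2/q}\Gamma(N/2+2/q)}{N\,\Gamma(N/2)}\,\bC$.

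For the large-$N$ statement I would invoke the ratio asymptotic $\Gamma(z+a)/\Gamma(z)\sim z^a$ as $z\to\infty$ with $z=N/2$, $a=2/q$, so that the scalar prefactor behaves like $2^{2/q}(N/2)^{2/q}/N=N^{2/q-1}$.

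No step is genuinely hard: the only care required is the bookkeeping in the polar change of variables and the Gamma-function substitutions, plus recording the mild integrability $\E[\rho]<\infty$ that licenses the symmetry argument for $\E[\bu]=\bmu$. A useful sanity check is $q=2$, where $p(\rho)$ becomes the $\chi_N$ density, $\E[\rho^2]=N$, and the formula collapses to $\mCv(\bu)=\bC$, as it must in the Gaussian case.
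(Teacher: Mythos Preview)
Your proof is correct. Note, however, that the paper does not actually prove this proposition: it simply attributes the result to \cite{Li_2023} and moves on. So there is no ``paper's own proof'' to compare against here.

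That said, your argument is exactly the standard one for elliptically contoured distributions, and it is the same machinery the paper implicitly relies on elsewhere: the stochastic representation $\bu=\bmu+\bC^{1/2}\bz$ with $\bz=\rho\,\bS$ is precisely what the paper invokes (citing Theorem~2.1 of \cite{Li_2023}) when it writes the stochastic mapping $\tilde f:\bX\to\bY=R\bL\bS$ in Section~\ref{sec:QEP-LVM}, and the radial law you derive, under the substitution $t=\rho^q$, is the $r^{q/2}\sim\chi^2(N)$ fact quoted in the appendix. Your bookkeeping on the polar change of variables, the Gamma integrals, and the ratio asymptotic $\Gamma(z+a)/\Gamma(z)\sim z^a$ is all in order, and the $q=2$ sanity check is a nice touch.
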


\subsection{$Q$-exponential Process and Multi-output Q-EP}
\cite{Li_2023} prove that the multivariate $q$-exponential random vector $\bu\sim \qED_N(0, \bC)$ 
satisfies the conditions of Kolmogorov's extension theorem hence it can be generalized to a stochastic process.
For this purpose, we scale it by a factor $N^{\half-\frac{1}{q}}$ so that its covariance is asymptotically finite (refer to Proposition \ref{prop:qED_cov}). If $\bu\sim \qED_N(0, \bC)$, then we denote $\bu^*:=N^{\half-\frac{1}{q}}\bu \sim \sqED_N(0, \bC)$ as a \emph{scaled} $q$-exponential random variable.
With a covariance (symmetric and positive-definite) kernel $\mC : \mD\times \mD\to \mbR$,
we define the following \emph{$q$-exponential process (Q-EP)} based on the scaled $q$-exponential distribution $\sqED_N(0, \bC)$.
\begin{dfn}\label{dfn:qEP}
    A (centered) $q$-exponential process $u(x)$ with kernel $\mC$, $\qEP(0, \mC)$, is a collection of random variables such that any finite set, $\bu:=(u(x_1),\cdots u(x_N))$, follows a scaled multivariate $q$-exponential distribution $\sqED(0, \bC)$, where $\bC=[\mC(x_i,x_j)]_{N\times N}$.
    If $\mC=\mI$, then $u$ is said to be \emph{marginally identical but uncorrelated (m.i.u.)}.
\end{dfn}
\begin{rk}
When $q=2$, $\qED_N(\bmu, \bC)$ reduces to $\mN_N(\bmu, \bC)$ and $\qEP(0, \mC)$ becomes $\GP(0,\mC)$. When $q\in[1,2)$, $\qEP(0, \mC)$ lends flexibility to modeling functional data with more regularization than GP.
\end{rk}

One caveat of Q-EP is that uncorrelation (identity covariance) does not imply independence except for the special Gaussian case ($q=2$). For multiple Q-EPs, $(u_1(x), \cdots, u_D(x))$, we usually do not assume them independent because their joint distribution is difficult to work with (due to the lack of additivity in the exponential part of density function \eqref{eq:qED}). Rather, uncorrelation is a preferable assumption. In general, we define multi-output (multivariate) Q-EPs through matrix vectorization.
\begin{dfn}\label{dfn:multi_qEP}
    A multi-output (multivariate) \emph{$q$-exponential process}, $u(\cdot)=(u_1(\cdot), \cdots, u_D(\cdot))$, each $u_j(\cdot)\sim\qEP(\mu_j,\mC_x)$, is said to have association $\bC_t$ if at any finite locations, $\bx=\{x_n\}_{n=1}^N$, $\VEC([u_1(\bx),\cdots,u_D(\bx)]_{N\times D})\sim \qED_{ND}(\VEC(\bmu), \bC_t\otimes \bC_x)$, where we have $u_j(\bx)=\tp{[u_j(x_1),\cdots,u_j(x_N)]}$, for $j=1, \ldots, D$, $\bmu=[\mu_1(\bx),\cdots,\mu_D(\bx)]_{N\times D}$ and $\bC_x=[\mC_x(x_n,x_m)]_{N\times N}$. We denote $u\sim \qEP(\mu,\mC_x, \bC_t)$.
    In particular, $\{u_j(\cdot)\}$ are m.i.u. if $\bC_t=\bI_D$.
\end{dfn}

In the following, we will stack m.i.u. multi-output Q-EPs to build a deep Q-EP.

\subsection{Bayesian Regression with Q-EP Priors}
Given data $\bx=\{x_n\}_{n=1}^N$ and $\by=\{y_n\}_{n=1}^N$, we consider the generic Bayesian regression model:
\begin{equation}\label{eq:regression}
\begin{aligned}
\by &= f(\bx) + \vect\eps, \quad \vect\eps \sim \qED_N(0,\Sigma),\\
f &\sim \qEP(0,\mC) .
\end{aligned}
\end{equation}
\cite[Theorem 3.5 of][]{Li_2023} states the tractable posterior (predictive) distribution, which is the key point of the deep generalization.
\begin{thm}
For the regression model \eqref{eq:regression}, the posterior distribution of $f(x_*)$ at $x_*$ is
\begin{equation*}\label{eq:post_pred}
f(x_*)|\by,\bx,x_*  \sim \qED(\bmu^*, \bC^*), \quad \bmu^*=\tp{\bC}_*(\bC+\Sigma)^{-1}\by, \quad \bC^*=\bC_{**}-\tp{\bC}_*(\bC+\Sigma)^{-1}\bC_* ,
\end{equation*}
where $\bC=\mC(\bx,\bx)$, $\bC_*=\mC(\bx,x_*)$, and $\bC_{**}=\mC(x_*,x_*)$.
\end{thm}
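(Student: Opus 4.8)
The plan is to follow the classical Gaussian-process derivation of a predictive law --- first identify the joint distribution of the training outputs $\by$ and the test value $f(x_*)$, then condition on $\by$ --- while tracking the two features that separate the $q$-exponential family from the Gaussian one: the dimension-dependent radial factor $r^{(\frac q2-1)N/2}$ in the density \eqref{eq:qED}, and the scaling $N^{\half-\frac1q}$ built into $\sqED$ and hence into $\qEP$ (Proposition~\ref{prop:qED_cov} is what makes that scaling the natural choice).

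First I would assemble the joint law of $\by$ and $f(x_*)$. Since $f\sim\qEP(0,\mC)$, Definition~\ref{dfn:qEP} together with the multi-output construction of Definition~\ref{dfn:multi_qEP} gives that $\big(f(\bx),\,f(x_*)\big)$ is a scaled multivariate $q$-exponential with mean zero and kernel matrix assembled from the blocks $\bC=\mC(\bx,\bx)$, $\bC_*=\mC(\bx,x_*)$ and $\bC_{**}=\mC(x_*,x_*)$. The noise $\vect\eps\sim\qED_N(0,\Sigma)$, which the model \eqref{eq:regression} takes uncorrelated with $f$ and jointly $q$-exponential with it (block-diagonal covariance $\bC$ and $\Sigma$), enters $\by=f(\bx)+\vect\eps$ through an affine map, so by closure of the (scaled) $q$-exponential family under affine transformations --- a consequence of the Kolmogorov-consistent construction of \cite{Li_2023}, which for $q=2$ is just the statement that affine images of Gaussians are Gaussian --- one obtains
$$(\by,\,f(x_*))\ \sim\ \sqED_{N+1}\!\big(\bzero,\,\bK\big),\qquad \bK=\begin{pmatrix}\bC+\Sigma & \bC_*\\[2pt] \tp\bC_* & \bC_{**}\end{pmatrix}.$$

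Next I would condition on $\by$ via Bayes' rule, $p\big(f(x_*)\mid\by\big)\propto p\big(\by,\,f(x_*)\big)$, inserting the density \eqref{eq:qED} for the $\sqED_{N+1}(\bzero,\bK)$ law. The algebraic core is the Schur-complement decomposition of the quadratic form appearing there,
$$\tp{(\by,\,f(x_*))}\,\bK^{-1}\,(\by,\,f(x_*)) \;=\; \tp\by(\bC+\Sigma)^{-1}\by \;+\; \tp{\big(f(x_*)-\bmu^*\big)}(\bC^*)^{-1}\big(f(x_*)-\bmu^*\big),$$
together with the determinant identity $|\bK|=|\bC+\Sigma|\cdot|\bC^*|$, where $\bmu^*=\tp\bC_*(\bC+\Sigma)^{-1}\by$ and $\bC^*=\bC_{**}-\tp\bC_*(\bC+\Sigma)^{-1}\bC_*$ are exactly the quantities in the statement. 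The $\by$-only pieces --- including the normalizing constant, which is $p(\by)$ --- cancel, leaving a function of $f(x_*)$ that is elliptically contoured about $\bmu^*$ with shape matrix $\bC^*$ (and, by Proposition~\ref{prop:qED_cov}, centered there also in the sense of mean, giving $\E[f(x_*)\mid\by]=\bmu^*$). It remains to identify this residual factor with the $\qED(\bmu^*,\bC^*)$ density of Definition~\ref{dfn:qED}.

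The hard part is precisely that last identification. For $q=2$ the Schur split turns the conditional into a Gaussian at once, but for $q\neq2$ the radial factor $r^{(\frac q2-1)(N+1)/2}\exp\{-r^{q/2}/2\}$ does not split multiplicatively between the $\by$ and the $f(x_*)$ blocks, so it is not immediate that the conditional stays in the same family. The resolution is to do the computation inside the consistent family that underlies $\qEP$ --- equivalently, to retain the $N^{\half-\frac1q}$ scaling so that both marginalization and conditioning send $\{\qED_N\}$ into itself --- and to check that the radial variable left over after conditioning is again that of a scaled $q$-exponential with parameters $(\bmu^*,\bC^*)$. This closure property is the one substantive ingredient carried over from \cite{Li_2023}; once it is granted, the rest is the same bookkeeping as in the Gaussian predictive computation.
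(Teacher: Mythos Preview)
The paper does not supply a proof of this theorem; it simply records it as Theorem~3.5 of \cite{Li_2023}, so there is no in-paper argument to compare against. Your outline --- assemble the joint law of $(\by,f(x_*))$ as a scaled $q$-exponential with the block kernel $\bK$, then condition via the Schur decomposition of the quadratic form $r$ --- is the standard route and delivers the claimed $\bmu^*$ and $\bC^*$ immediately, as it would for any elliptically contoured family.

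You are also right that the only nontrivial step is the last identification. For a generic elliptical law the conditional is again elliptical with the Schur-complement location and scale, but its radial generator depends on the observed value $r_2=\tp\by(\bC+\Sigma)^{-1}\by$, so one does not automatically land back in the $\qED$ family when $q\neq 2$. That closure under conditioning (in the scaled family $\sqED$) is precisely the content supplied by \cite{Li_2023}, and both you and the present paper take it as an imported fact. One minor point of alignment: the passage from $(f(\bx),f(x_*),\vect\eps)$ to $(\by,f(x_*))$ requires closure of $\qED$ under affine maps, which the paper also invokes (citing \cite{fang1990generalized}) when deriving the marginal \eqref{eq:marglik}; your appeal to it is consistent with the paper's own usage.
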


Denote $\bX=[\bx_1,\cdots, \bx_Q]_{N\times Q}$, $\bF=[f_1(\bX),\cdots,f_D(\bX)]_{N\times D}$ and $\bY=[\by_1,\cdots, \by_D]_{N\times D}$.
With m.i.u. Q-EP priors as in Definition \eqref{dfn:multi_qEP} imposed on $f:=(f_1,\cdots,f_D)$, we now consider the following multivariate regression problem:
\begin{equation}\label{eq:multi_regression}
\begin{aligned}
\textrm{likelihood}:\quad    \VEC(\bY) | \bF &\sim \qED_{ND}(\VEC(\bF), \bI_D\otimes\Sigma), \\
\textrm{prior on latent function}:\quad    
f &\sim \qEP(0, \mC, \bI_D).
\end{aligned}
\end{equation}
Based on the additivity of $\qED$ (as a special elliptic contour) random variables \citep{fang1990generalized}, we can find the marginal of $\bY$ by
noticing that $\bY = \bF + \vect\eps$ with $\VEC(\vect\eps)\sim \qED(\bzero, \bI_D\otimes\Sigma)$:
\begin{equation}\label{eq:marglik}
\textrm{marginal likelihood}:\quad    \VEC(\bY)|\bX \sim \qED_{ND}(\bzero, \bI_D\otimes(\bC+\Sigma)).
\end{equation}

\section{Shallow Q-EP Model}\label{sec:QEP-LVM}
In this section we introduce a shallow (1-layer) Q-EP model which serves as a building block for the deep Q-EP model to be developed in Section \ref{sec:deep-QEP}. We start with the the marginal model \eqref{eq:marglik} that can be identified as a latent variable model \citep{Lawrence_2003} with specified kernel. This defines a shallow Q-EP model. Then we develop variational infererence with sparse approximation for such model \citep{Titsias_2010} and stack multiple layers to form a deep Q-EP.

Note the marginal model \eqref{eq:marglik} of $\bY|\bX$ can be viewed as a stochastic mapping \citep[Theorem 2.1 of][]{Li_2023}:
\begin{equation*}\label{eq:stoch_map}
    \tilde f: \bX \to \bY = R \bL \bS ,
\end{equation*}
where $R^q\sim \chi^2(N)$, $\bL$ is the Cholesky factor of $\bC_\bX+\Sigma$ whose value depends on $\bX$, and $\bS:=[S_1,\cdots, S_D]\sim \mathrm{Unif}(\prod_{d=1}^D \mS^{N+1})$, i.e. each $S_d$ is uniformly distributed on an $N$-dimensional unit-sphere $\mS^{N+1}$.

Note $\bX$ is an input variable in the supervised learning, and could also be a latent variable in the unsupervised learning.
In the latter case, the shallow Q-EP model \eqref{eq:marglik} of $\bY|\bX$ can be regarded a latent variable model obtained by integrating out the latent function $\bF$ in model \eqref{eq:multi_regression}, which is a linear mapping in probabilistic PCA \citep{Tipping_1999} and a multi-output GP in GP-LVM \citep{Lawrence_2003,Lawrence_2005}. GP can be replaced by Q-EP to impose flexible regularization on the input (latent) space, and hence we propose the shallow Q-EP model as also a Q-EP LVM.

For the convenience of exposition, we set $\Sigma=\beta^{-1}\bI_N$ and denote $\bK:=\bC_\bX+\Sigma$. We adopt the following 
automatic relevance determination (ARD) kernel as in \cite{Titsias_2010}, e.g. squared exponential (SE), to determine the dominant dimensions in the input (latent) space:
\begin{equation}\label{eq:ARD}
    \bK=[k(\bx_n, \bx_m)]_{N\times N}, \quad 
    k(\bx_n, \bx_m) = \alpha^{-1} \exp\left\{-\half \tp{(\bx_n-\bx_m)} \diag(\bgamma) (\bx_n-\bx_m) \right\} .
\end{equation}

%

 The probabilistic PCA can be reformulated as an LVM by integrating out the linear mapping $\bF=\bX\bW$ (through the parameters $\bW$) \citep{Lawrence_2003}. In the Q-EP setting, the resulted model becomes \eqref{eq:marglik} with $\bK=\alpha^{-1}\bX\tp{\bX}+\beta^{-1}\bI_N$.
 Denote $r(\bY)=\tp{\VEC(\bY)} (\bI_D\otimes\bK)^{-1}\VEC(\bY)= \tr(\bK^{-1}\bY\tp{\bY})$.
 Then the log-likelihood is
 \begin{equation}\label{eq:log-likelihood}
     L = -\frac{D}{2}\log|\bK| + \frac{ND}{2}\left(\frac{q}{2}-1\right) \log r(\bY) - \half r^{\frac{q}{2}}(\bY).
 \end{equation}
 The following theorem states that the maximum likelihood estimator (MLE) for $\bX$ is equivalent to the solution for this probabilistic PCA \citep{Tipping_1999, Minka_2000} with Q-EP prior.
 \begin{thm}\label{thm:prob_PCA}
 Suppose $\bY\tp{\bY}$ has eigen-decomposition $\bU \vect\Lambda \tp{\bU}$ with $\vect\Lambda$ being the diagonal matrix with eigenvalues $\{\lambda_i\}_{i=1}^N$. Then the MLE for \eqref{eq:log-likelihood} is
 \begin{equation*}
 \bX=\bU_Q \bL \bV, \quad \bL = \diag(\{\sqrt{\alpha(c\lambda_i-\beta^{-1})}\}_{i=1}^Q), \quad c=\left[\frac{q(D\wedge Q)^{\frac{q}{2}}}{2D(D\wedge Q) + (q-2) ND}\right]^{\frac{2}{q}},
 \end{equation*}
 where $\bU_Q$ is an $N\times Q$ matrix with the first $Q$ eigen-vectors in $\bU$, and $\bV$ is an arbitrary $Q\times Q$ orthogonal matrix.
 \end{thm}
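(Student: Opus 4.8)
The plan is to maximize $L$ in \eqref{eq:log-likelihood} over $\bX\in\mbR^{N\times Q}$ by reducing it to a one-dimensional problem in the scalar $c$. Abbreviating $h(t):=\frac{ND}{2}\bigl(\tfrac q2-1\bigr)\log t-\half t^{q/2}$, we have $L=-\frac D2\log|\bK|+h\bigl(r(\bY)\bigr)$ with $\bK=\alpha^{-1}\bX\tp{\bX}+\beta^{-1}\bI_N$ and $r(\bY)=\tr\bigl(\bK^{-1}\bY\tp{\bY}\bigr)$, so $L$ depends on $\bX$ only through the eigenvalues of $\bK$ and the relative orientation of the eigenvectors of $\bK$ and of $\bY\tp{\bY}=\bU\vect\Lambda\tp{\bU}$. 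I would optimize the orientation first, with the spectrum of $\bK$ held fixed, and then the spectrum.

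\emph{Step 1 (aligning eigenvectors).} Since $\bX\tp{\bX}$ has rank at most $Q$, every admissible $\bK$ obeys $\bK\succeq\beta^{-1}\bI_N$ and has eigenvalues $\mu_1\ge\cdots\ge\mu_Q\ge\beta^{-1}$ together with $\beta^{-1}$ of multiplicity $N-Q$; conversely any such spectrum, paired with any orthonormal eigenbasis, is realized by some $\bX$. For a fixed spectrum $\log|\bK|$ is fixed, so (using that $h$ is strictly decreasing on $(0,\infty)$ for $q\le 2$; $q>2$ is handled similarly) maximizing $L$ is the same as minimizing $r=\tr(\bK^{-1}\bY\tp{\bY})$ over the eigenbasis. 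By the von Neumann trace inequality this minimum is attained when the eigenvectors of $\bK$ coincide with those of $\bY\tp{\bY}$, taken in the same decreasing order of eigenvalue; since the $Q$ ``free'' eigenvalues of $\bK$ dominate the $N-Q$ fixed ones, they are matched with $\lambda_1,\dots,\lambda_Q$. Hence the optimal $\bX$ has the form $\bX=\bU_Q\bL\bV$ with $\bV$ an arbitrary $Q\times Q$ orthogonal matrix and $\bL$ diagonal with $\bL_{ii}^2=\alpha(\mu_i-\beta^{-1})$, and then $r=\sum_{i=1}^{Q}\lambda_i/\mu_i+\beta\sum_{i>Q}\lambda_i$.

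\emph{Step 2 (optimizing eigenvalues and solving for $c$).} It remains to maximize $L$ over $(\mu_1,\dots,\mu_Q)\in[\beta^{-1},\infty)^Q$. Stationarity $\partial L/\partial\mu_j=0$ simplifies to $\mu_j=c\,\lambda_j$ with $c=\frac{N(2-q)}{2r}+\frac{q}{2D}\,r^{q/2-1}=-\tfrac2D h'(r)$ for every ``active'' coordinate (those not pushed to the boundary, equivalently those with $c\lambda_j\ge\beta^{-1}$), while the remaining ones sit at $\mu_j=\beta^{-1}$, i.e.\ $\bL_{jj}=0$. Since $\bY\tp{\bY}$ generically has rank $N\wedge D$ whereas only $Q$ eigenvalues of $\bK$ are free, a mild signal-strength assumption forces exactly $M:=D\wedge Q$ active coordinates; for these $\sum_{i=1}^{M}\lambda_i/(c\lambda_i)=M/c$, and the residual $\beta\sum_{i>Q}\lambda_i$ vanishes when $\operatorname{rank}(\bY\tp{\bY})\le Q$ (e.g.\ when $D\le Q$), so $r=M/c$. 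Substituting $r=M/c$ into the formula for $c$ collapses everything to the scalar identity $M=\tfrac{N(2-q)}2+\tfrac{q}{2D}r^{q/2}$, whence $r^{q/2}=\tfrac{2DM+(q-2)ND}{q}$ and $c^{q/2}=M^{q/2}/r^{q/2}=\tfrac{q\,M^{q/2}}{2DM+(q-2)ND}$; taking the $(2/q)$-th power gives the asserted $c$, and $\bL_{ii}=\sqrt{\alpha(\mu_i-\beta^{-1})}=\sqrt{\alpha(c\lambda_i-\beta^{-1})}$.

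\emph{Step 3 (optimality), and the main obstacle.} To finish I would verify that this interior stationary point is the global maximum. A direct computation gives the Hessian at the critical point as $\tfrac{h''(r)}{c^2}\,\bm{w}\tp{\bm{w}}-\tfrac D2\diag(\mu_1^{-2},\dots,\mu_M^{-2})$ with $w_j=\mu_j^{-1}$, and this is negative definite exactly when $2(D\wedge Q)>(2-q)N$ --- the same inequality under which the formula for $r^{q/2}$ above is positive and $c>0$, so the closed form is only claimed in that regime --- after which comparing $L$ against its values on the boundary faces (configurations in which one or more coordinates are deactivated) upgrades the local maximum to a global one. I expect the genuine work to lie in Step~1 together with the active-set bookkeeping of Step~2: justifying that precisely $D\wedge Q$ eigenvalues stay active under the signal-strength hypothesis, and that the eigenvalues $\lambda_i$ with $i>D\wedge Q$ drop out of the closed form. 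The remaining passage to the scalar equation for $c$ is routine algebra; as a consistency check, at $q=2$ one has $h''\equiv0$, $c=1/D$ and $\bL_{ii}=\sqrt{\alpha(\lambda_i/D-\beta^{-1})}$, recovering the probabilistic-PCA solution of \cite{Tipping_1999,Minka_2000}.
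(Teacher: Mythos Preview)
Your argument is correct and reaches the same closed form, but the route differs from the paper's in a useful way. The paper proceeds purely through first-order conditions: it computes $\partial L/\partial\bK$, writes $\partial L/\partial\bX=2\alpha^{-1}(\partial L/\partial\bK)\bX=0$, \emph{postulates} the ansatz $\bX=\bU\bL\bV$, and then verifies that this ansatz solves the resulting matrix equation, arriving at the same self-consistency relation $\mu_i=c\lambda_i$ and the same scalar equation for $c$. Your Step~1 instead derives the eigenvector alignment a priori via the von~Neumann trace inequality, which explains \emph{why} the ansatz is the right one rather than merely checking it; your Step~3 (Hessian test and boundary comparison) goes beyond the paper, which stops at stationarity and never addresses whether the critical point is a maximum. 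You are also more explicit than the paper about the active-set bookkeeping: the paper silently writes $r=\sum_{i=1}^{D\wedge Q}\lambda_i(\alpha^{-1}l_i^2+\beta^{-1})^{-1}$, effectively dropping the residual $\beta\sum_{i>Q}\lambda_i$ without comment, whereas you flag the rank condition under which this is legitimate. The trade-off is that your von~Neumann step leans on monotonicity of $h$, which is clean for $q\le 2$ but not for $q>2$ (your ``handled similarly'' is the one soft spot); the paper's gradient-and-ansatz argument is agnostic to the sign of $q-2$ but only delivers a stationary point.
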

 \begin{proof}
 See Appendix \ref{apx:prob_PCA}.
 \end{proof}
 \begin{rk}
 When $q=2$, the eigenvalues reduce to $l_i=\sqrt{\alpha(D^{-1}\lambda_i-\beta^{-1})}$ corresponding to those for GP-LVM, though they were mistakenly stated as $l_i=(\alpha^{-1}(D^{-1}\lambda_i-\beta^{-1}))^{-\frac{1}{2}}$ in \cite{Lawrence_2003}.
 \end{rk}

\subsection{Bayesian Shallow Q-EP}\label{sec:Bayes_LVM}
Like \cite{Titsias_2010}, we adopt a prior for the input (latent) variable $\bX$ and introduce the following Bayesian shallow Q-EP model:
\begin{equation}\label{eq:Bayes_LVM}
\begin{aligned}
\textrm{marginal likelihood}:\quad    \VEC(\bY) | \bX &\sim \qED(\bzero, \bI_D\otimes\bK), \\
\textrm{prior on input/latent variable}:\quad    \VEC(\bX) &\sim \qED(\bzero, \bI_{QD}).
\end{aligned}
\end{equation}
Compared with the optimization method \citep{Lawrence_2003}, the Bayesian training procedure \citep{Titsias_2010} is robust to overfitting and can automatically determine the intrinsic dimensionality of the nonlinear input (latent) space.

To scale up the application, we consider variational Bayes to train the shallow Q-EP model \eqref{eq:Bayes_LVM}.
The variational inference for this model is much more complicated than GP because the log-likelihood \eqref{eq:multi_regression} is no longer represented as a quadratic form of data. Yet an appropriate evidence lower bound (ELBO) can still be obtained with the help of Jensen's inequality.

For variational Bayes, we approximate the posterior distribution $p(\bX|\bY)\propto p(\bY|\bX)p(\bX)$ with the uncorrelated $\qED$: 
\begin{equation*}\label{eq:var_dist}
    q(\bX) \sim \qED(\bmu, \diag(\{\bS_n\})),
\end{equation*}
where each covariance $\bS_n$ is of size $D\times D$ and can be chosen as a diagonal matrix for convenience. 

To speed up the computation, sparse variational approximation \citep{Titsias_2009, Lawrence_2007} is adopted by introducing the inducing points $\tilde\bX\in\mbR^{M\times Q}$ with their function values $\bU=[f_1(\tilde\bX),\cdots,f_D(\tilde\bX)]\in\mbR^{M\times D}$.
Hence the marginal likelihood $p(\bY|\bX)$ in \eqref{eq:Bayes_LVM} can be augmented to a joint distribution of several $\qED$ random variables:
\begin{equation*}
    p(\bY|\bX)\propto p(\bY|\bF) p(\bF|\bU,\bX,\tilde\bX) p(\bU|\tilde\bX) ,
\end{equation*}
where $p(\VEC(\bF)|\bU,\bX,\tilde\bX) \sim \qED(\VEC(\bK_{NM}\bK_{MM}^{-1}\bU), \bI_D\otimes(\bK_{NN}-\bK_{NM}\bK_{MM}^{-1}\bK_{MN}))$ and $p(\VEC(\bU)|\tilde\bX) \sim \qED(\bzero, \bI_D \otimes\bK_{MM})$.

Denote by $\varphi(r; \Sigma, D):=-\frac{D}{2}\log |\Sigma| + \frac{ND}{2}\left(\frac{q}{2}-1\right) \log r - \half r^{\frac{q}{2}}$. 
With the variational distribution $q(\bF,\bU,\bX)=p(\bF|\bU,\bX)q(\bU)q(\bX)$ for $q(\bU) \sim \qED(\bM, \diag(\{\bSigma_d\}))$, the following final ELBO is obtained by the two-stage approach in \cite[SVGP][]{Hensman_2015} (Refer to Section \ref{sec:elbo_qeplvm} for details):
\begin{equation}\label{eq:LVM_ELBO}
\begin{aligned}
    \log p(\bY) \geq & \mL(q) = \int q(\bX)q(\bU)p(\bF|\bU,\bX)\log\frac{p(\bY|\bF)p(\bU)p(\bX)}{q(\bU)q(\bX)} d\bF d\bU d\bX \\
    \geq & h^*(\bY, \bX) - \KL^*_\bU - \KL^*_\bX, \\
    h^*(\bY, \bX) =& \varphi(r_\bY; \beta^{-1}\bI_N, D), \\
    r_\bY =& r(\bY, \Psi_1\bK_{MM}^{-1}\bM) + \beta\tr( \tp{\bM}\bK_{MM}^{-1}(\Psi_2-\tp{\Psi}_1\Psi_1)\bK_{MM}^{-1}\bM) \\
    &+ \beta D[\psi_0-\tr(\bK_{MM}^{-1}\Psi_2)] + \beta\sum_{d=1}^D \tr(\bK_{MM}^{-1}\bSigma_d \bK_{MM}^{-1}\Psi_2), \\
    - \KL^*_\bU =& \half \sum_{d=1}^D\log |\bSigma_d| 
    + \varphi\left(\tr(\tp{\bM}\bK_{MM}^{-1}\bM) + \sum_{d=1}^D\tr(\bSigma_d\bK_{MM}^{-1}); \bK_{MM}, D\right), \\
    - \KL^*_\bX =& \half \sum_{n=1}^N\log |\bS_n| 
    + \varphi\left(\tr(\tp{\bmu}\bmu) + \sum_{n=1}^N\tr(\bS_n); \bI_N, Q\right),
\end{aligned}
\end{equation}
where $\psi_0=\tr(\langle\bK_{NN}\rangle_{q(\bX)})$, $\Psi_1= \langle\bK_{NM}\rangle_{q(\bX)}$, and $\Psi_2 = \langle\bK_{MN}\bK_{NM}\rangle_{q(\bX)}$. 

\begin{rk}
When $q=2$, $\varphi(r; \Sigma, D)=-\frac{D}{2}\log |\Sigma| - \half r$ with $r=r(\bY, \Psi_1\bK_{MM}^{-1}\bM)$ becomes the log-density of matrix normal $\mMN_{N\times D}(\Psi_1\bK_{MM}^{-1}\bM, \beta^{-1}\bI_N, \bI_D)$. Then the ELBO \eqref{eq:LVM_ELBO} reduces to the ELBO as in Equation (7) of \cite[SVGP][]{Hensman_2015} with an extra term $\beta\tr( \tp{\bM}\bK_{MM}^{-1}(\Psi_2-\tp{\Psi}_1\Psi_1)\bK_{MM}^{-1}\bM)$.
\end{rk}

\begin{figure}[t]
  \centering
  \includegraphics[width=1\textwidth,height=.2\textwidth]{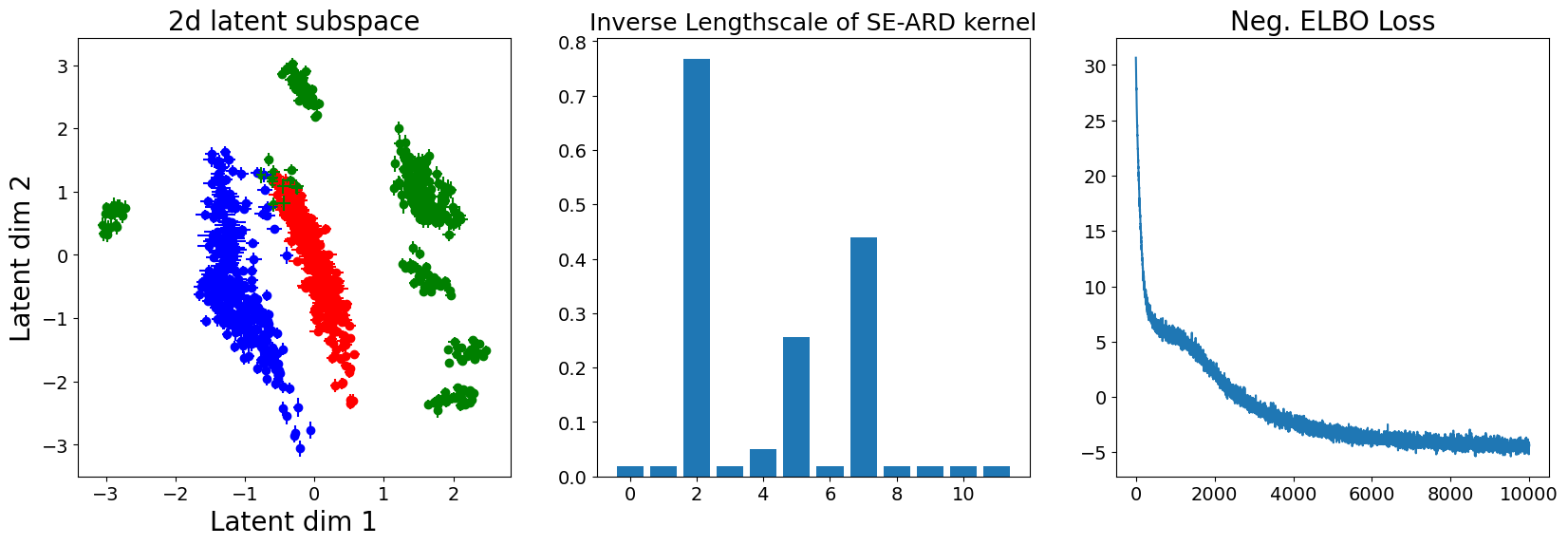}
  \includegraphics[width=1\textwidth,height=.2\textwidth]{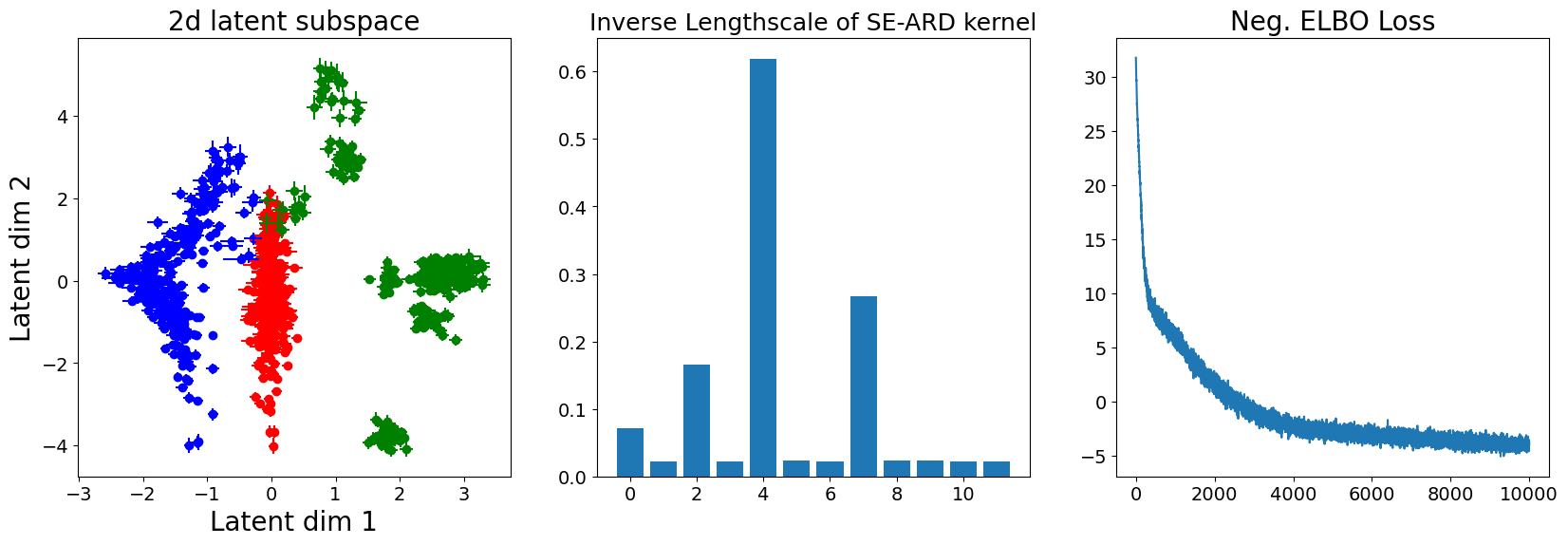}
  \includegraphics[width=1\textwidth,height=.2\textwidth]{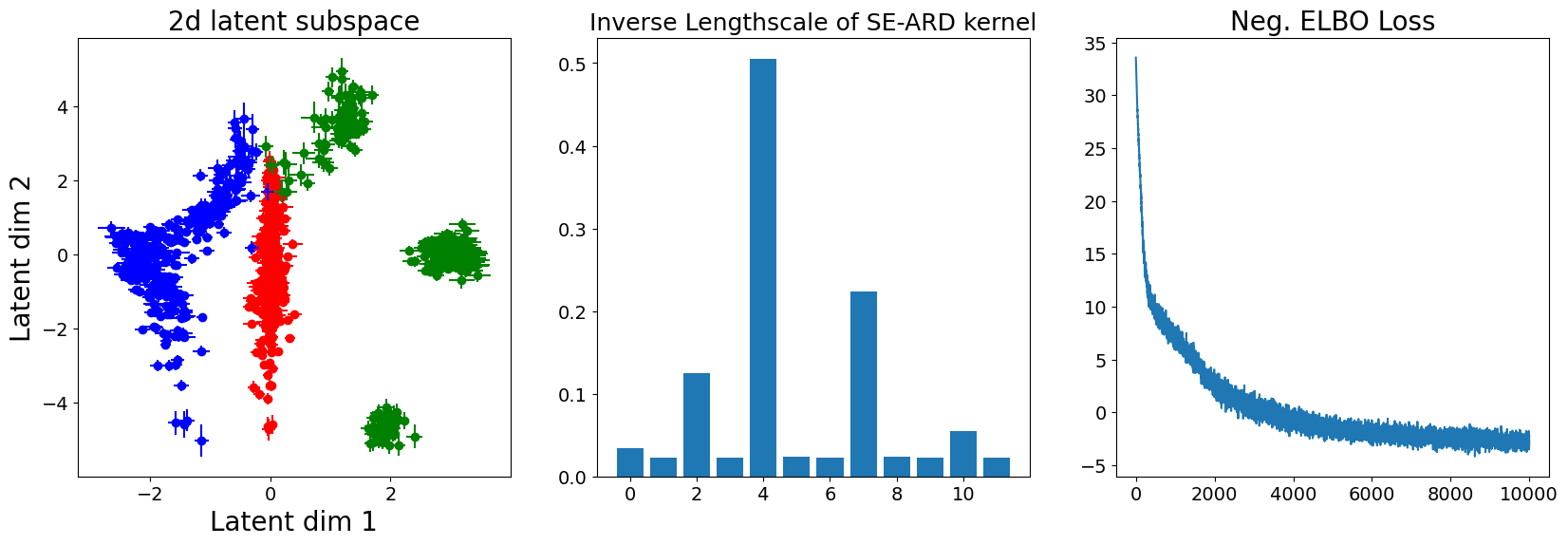}
  \caption{2d latent space of multi-phase oil-flow dataset: contrasting GP-LVM (top row) with two shallow Q-EPs for $q=1.25$ (middle row) and $q=1$ (bottom row).}
  \label{fig:LVM}
\end{figure}

We demonstrate the behavior of shallow Q-EP as an LVM in unsupervised learning and contrast it with GP-LVM using the canonical multi-phase oil-flow dataset used in \cite{Titsias_2010} that consists of 1000 observations (12-dimensional) corresponding to three different phases of oil-flow.
Figure \ref{fig:LVM} visualizes 2d slices of the latent spaces identified with the most dominant latent dimensions found by GP-LVM (top) and two shallow Q-EP models with $q=1.25$ (middle) and $q=1$ (bottom) respectively. The vertical and horizontal bars indicate axis aligned uncertainty around each latent point.
As GP-LVM corresponds to a shallow Q-EP with $q=2$, the parameter $q>0$ controls a regularization effect of shallow Q-EP: the smaller $q$ leads to more regularization on the learned latent representations and hence yields more aggregated clusters, as illustrated by the green class in the first column of Figure \ref{fig:LVM}.
The two types of models also differ in the dominant relevant dimensions: $(2,5,7)$ for GP-LVM versus $(2,4,7)$ for QEP-LVM. Note, the ELBO loss of shallow Q-EP converges slightly faster than that of GP-LVM in this example, yet their final values are not comparable because two models have different densities.

\section{Deep Q-EP Model}\label{sec:deep-QEP}
In this section, we construct the deep Q-EP model by stacking multiple shallow Q-EP layers introduced in Section \ref{sec:QEP-LVM}, similarly as building deep GP with GP-LVMs \citep{Damianou_2013}.
More specifically, we consider a hierarchy of $L$ shallow Q-EP layers \eqref{eq:Bayes_LVM} as follows:
\begin{equation*}\label{eq:LVM_stacks}
\begin{aligned}
    y_{nd} &= f_d^0(\bx^1_n) + \eps_{nd}^0, \quad d=1,\cdots, D_0, \quad \bx^1_n\in \mbR^{D_1}, \\
    x^1_{nd} &= f_d^1(\bx^2_n) + \eps_{nd}^1, \quad d=1,\cdots, D_1, \quad \bx^2_n\in \mbR^{D_2}, \\
    \vdots &\phantom{= f_d^1(\bx^2_n) +} \vdots \phantom{\eps_{nd}^1, \quad d=1,} \vdots \phantom{\cdots, D_1, \quad \bx^2_n} \vdots\\
    x^{L-1}_{nd} &= f_d^{L-1}(\bz_n) + \eps_{nd}^{L-1}, \quad d=1,\cdots, D_{L-1}, \quad \bz_n\in \mbR^{D_L},
\end{aligned}
\end{equation*}
where $\vect{\eps}^\ell\sim \qED(\bzero, \Gamma^\ell)$, $f^\ell\sim \qEP(0, k^\ell, I_{D_\ell})$ for $\ell=0,\cdots,L-1$ and we identify $\bY=\bX^0$ and $\bZ=\bX^{L}$.

Consider the prior $\bZ\sim \qED(\bzero, \bI_{ND_L})$. The joint probability, augmented with the inducing points $\tilde\bX^\ell$ and the associated function values $\bU^\ell=[f_d^\ell(\tilde\bX^\ell)]_{d=1}^{D_\ell}$, is decomposed as
\begin{equation*}
    p(\{\bX^\ell,\bF^\ell,\bU^\ell\}_{\ell=0}^{L-1},\bZ) = \prod_{\ell=0}^{L-1} p(\bX^\ell|\bF^\ell) p(\bF^\ell|\bU^\ell, \bX^{\ell+1}) p(\bU^\ell) \cdot p(\bZ).
\end{equation*}
And we use the following variational distribution
\begin{equation*}
    \mQ=\prod_{\ell=0}^{L-1} p(\bF^\ell|\bU^\ell, \bX^{\ell+1}) q(\bU^\ell) q(\bX^{\ell+1}), \quad q(\bX^{\ell+1}) = \qED(\bmu^{\ell+1}, \diag(\{\bS^{\ell+1}_n\})).
\end{equation*}
Then the ELBO becomes
\begin{equation*}
\begin{aligned}
    \mL(\mQ) &= \int_{\{\bF^\ell,\bU^\ell,\bX^{\ell+1}\}_{\ell=0}^{L-1}} \mQ\log \frac{p(\{\bX^\ell,\bF^\ell,\bU^\ell\}_{\ell=0}^{L-1},\bZ)}{\prod_{\ell=0}^{L-1} q(\bU^\ell) q(\bX^{\ell+1})} \\
    &= h_0 - \KL_{\bU^0} + \sum_{\ell=1}^{L-1}[h_\ell -\KL_{\bU^\ell} +\mH_q(\bX_\ell)] - \KL_\bZ,
\end{aligned}
\end{equation*}
where $h_\ell= \left\langle \log p(\bX^\ell|\bF^\ell) \right\rangle_{q(\bF^\ell)q(\bX^{\ell+1})q(\bX^\ell)}$ with $q(\bX^0)=q(\bY)\equiv 1$.
Based on the previous bound \eqref{eq:LVM_ELBO}, we have for $\ell=1,\cdots,L-1$ (Refer to Section \ref{sec:elbo_deepqep} for details):
\begin{equation*}
\begin{aligned}
    h_0 \geq&  h^*(\bY, \bX^1),  \\
    h_\ell \geq& h^*(\bX^\ell, \bX^{\ell+1})= \varphi(r_{\bmu^\ell}; \Gamma^\ell, D_\ell), \\
    r_{\bmu^\ell} =& r(\bmu^\ell, \Psi_1^\ell(\bK_{MM}^\ell)^{-1}\bM^\ell) + \tr( \tp{(\bM^\ell)}(\bK_{MM}^\ell)^{-1}(\Psi_2^\ell-\tp{(\Psi_1^\ell)}(\Gamma^\ell)^{-1}\Psi_1^\ell)(\bK_{MM}^\ell)^{-1}\bM^\ell) \\
    &+ D_\ell[\psi_0^\ell-\tr((\bK_{MM}^\ell)^{-1}\Psi_2^\ell)] + \sum_{d=1}^{D_\ell} \tr((\bK_{MM}^\ell)^{-1}\bSigma_d^\ell (\bK_{MM}^\ell)^{-1}\Psi_2^\ell) \\
    &+ \tr((\bI_{D_\ell}\otimes (\Gamma^\ell)^{-1}) \diag(\{\bS_n^\ell\})), \\
    - \KL^*_{\bU^\ell} =& \half \sum_{d=1}^{D_\ell}\log |\bSigma_d^\ell| 
    + \varphi\left(\tr(\tp{(\bM^\ell)}(\bK_{MM}^\ell)^{-1}\bM^\ell) + \sum_{d=1}^{D_\ell}\tr(\bSigma_d^\ell(\bK_{MM}^\ell)^{-1}); \bK_{MM}^\ell, D_\ell\right) , \\
    \mH_q(\bX_\ell) \geq& \half \sum_{n=1}^N\log |\bS_n^\ell|, \\
    - \KL^*_\bZ =& \geq \half \sum_{n=1}^N\log |\bS_n^L| + \varphi\left(\tr(\tp{(\bmu^L)}\bmu^L) + \sum_{n=1}^N\tr(\bS_n^L); \bI_N, D_L\right) ,
\end{aligned}
\end{equation*}
where $\psi_0^\ell=\tr((\Gamma^\ell)^{-1}\langle\bK_{NN}^\ell\rangle_{q(\bX^{\ell+1})})$, $\Psi_1^\ell= \langle\bK_{NM}^\ell\rangle_{q(\bX^{\ell+1})}$, and $\Psi_2^\ell = \langle\bK_{MN}^\ell\bK_{NM}^\ell\rangle_{q(\bX^{\ell+1})}$.

\section{Numerical Experiments}\label{sec:numerics}

In this section, we compare our proposed deep Q-EP with deep GP \citep[DGP][]{Damianou_2013}, deep kernel learning with GP \citep[DKL-GP][]{Wilson_2016}, and deep sigma point process \citep[DSPP][]{Jankowiak_2020b} using simulated and benchmark datasets. In simulations, deep Q-EP model manifests unique features in properly modeling inhomogeneous data with abrupt changes or sharp contrast. For benchmark regression and classification problems, deep Q-EP demonstrates superior or comparable numerical performance. 
In most cases, 2 layer structure is sufficient for deep Q-EP to have superior or comparable performance compared with deep GP, and DSPP. A large feature extracting neural network (DNN with structure $D_L-1000-500-50-D_0$) is employed before one GP layer for DKL-GP unless stated otherwise. The Mat\'ern kernel ($\nu=1.5$) is adopted for all the models with trainable hyperparameters (magnitude and correlation strength) and $q=1$ is chosen for Q-EP and deep Q-EP models.
All the models are implemented in \texttt{GPyTorch} \citep{Gardner_2018} on GPU and the codes will be released.

\subsection{Time Series Regression}

We first consider a simulated 2-dimensional time series from \cite{Li_2023}, one with step jumps and the other with sharp turnings, whose true trajectories are as follows:
\begin{equation*}
\begin{aligned}
u_\text{J}(t) &=
1, && t\in[0,1]; \quad
0.5, && t\in (1, 1.5]; \quad
2, && t\in (1.5,2]; \quad
0, && otherwise; \\
u_\text{T}(t) &=
1.5t, && t\in[0,1]; \quad
3.5-2t, && t\in (1, 1.5]; \quad
3t-4, && t\in (1.5,2]; \quad
0, && otherwise.
\end{aligned}
\end{equation*}


We generate time series $\{\by_i\}_{i=1}^N$ by adding Gaussian noises to the true trajectories evaluated at $N=100$ evenly spaced points $t_i\in[0,2]$, i.e., 
$\by_{i}^*=\tp{[u_J(t_i), u_T(t_i)]}+ \vect\eps_i, \; \vect\eps_i\overset{iid}{\sim} N(\bzero,\sigma^2\bI_2), \;with \; \sigma=0.1, \; i=1,\cdots, N.$
Then we make prediction over 50 points evenly spread over $[0,2]$.

\begin{figure}[tbp]
\begin{subfigure}[b]{.495\textwidth}
\includegraphics[width=1\textwidth,height=.3\textwidth]{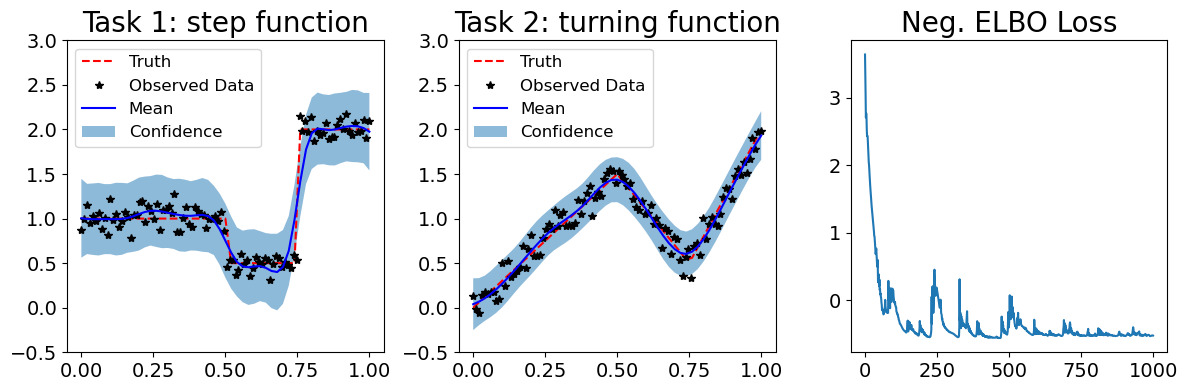}
\caption{Shallow GP regression.}
\label{fig:ts_GP}
\end{subfigure}
\begin{subfigure}[b]{.495\textwidth}
\includegraphics[width=1\textwidth,height=.3\textwidth]{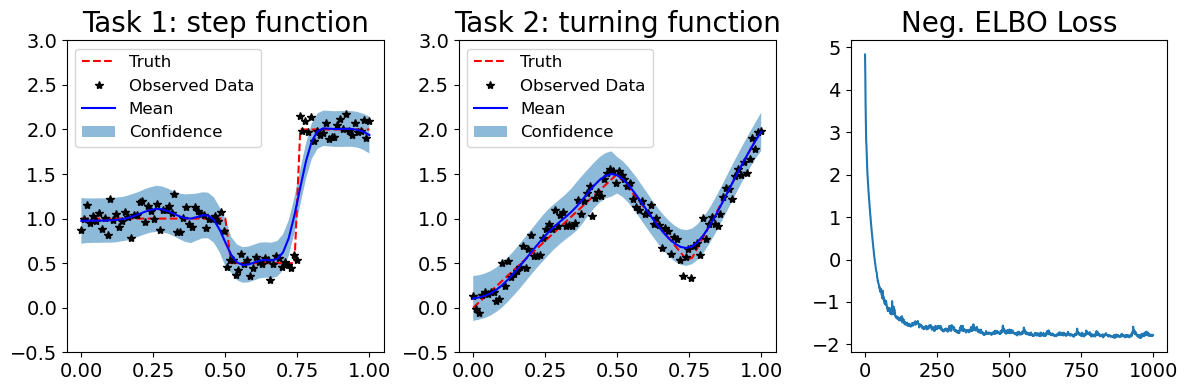}
\caption{Shallow Q-EP regression.}
\label{fig:ts_QEP}
\end{subfigure}
\begin{subfigure}[b]{.495\textwidth}
\includegraphics[width=1\textwidth,height=.3\textwidth]{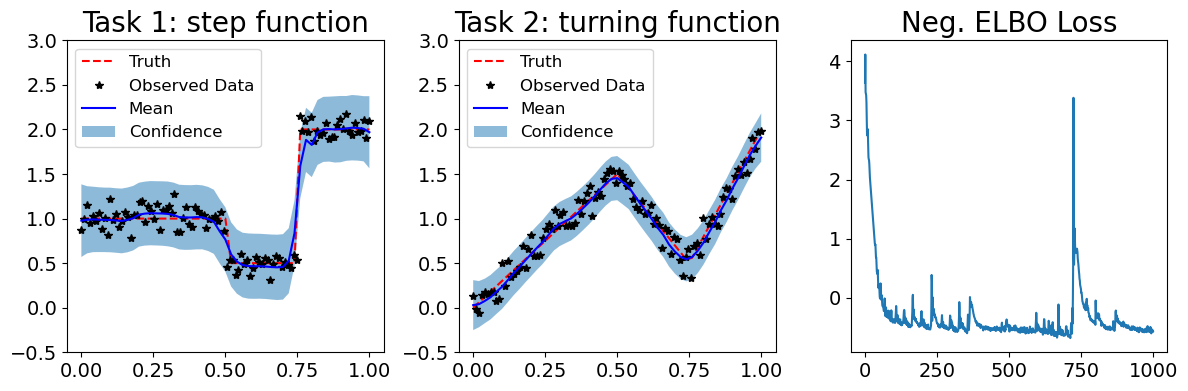}
\caption{Deep GP regression.}
\label{fig:ts_DGP}
\end{subfigure}
\begin{subfigure}[b]{.495\textwidth}
\includegraphics[width=1\textwidth,height=.3\textwidth]{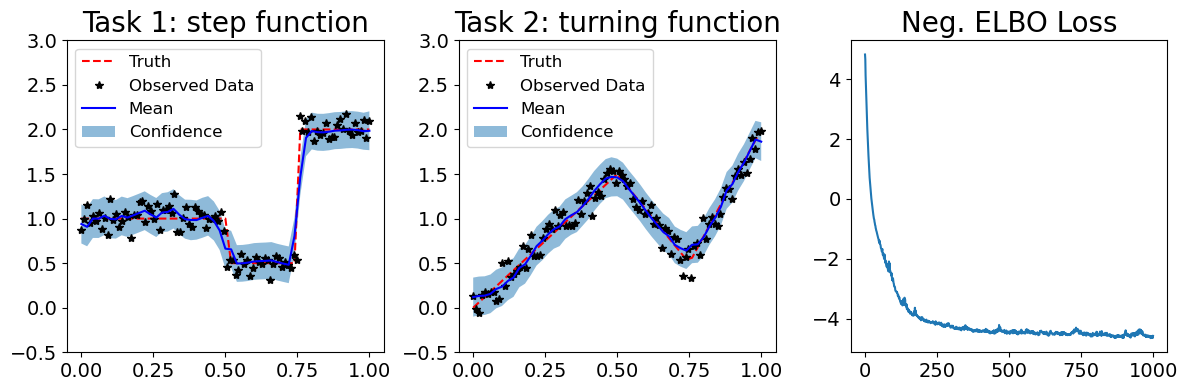}
\caption{Deep Q-EP regression.}
\label{fig:ts_DQEP}
\end{subfigure}
\begin{subfigure}[b]{.495\textwidth}
\includegraphics[width=1\textwidth,height=.3\textwidth]{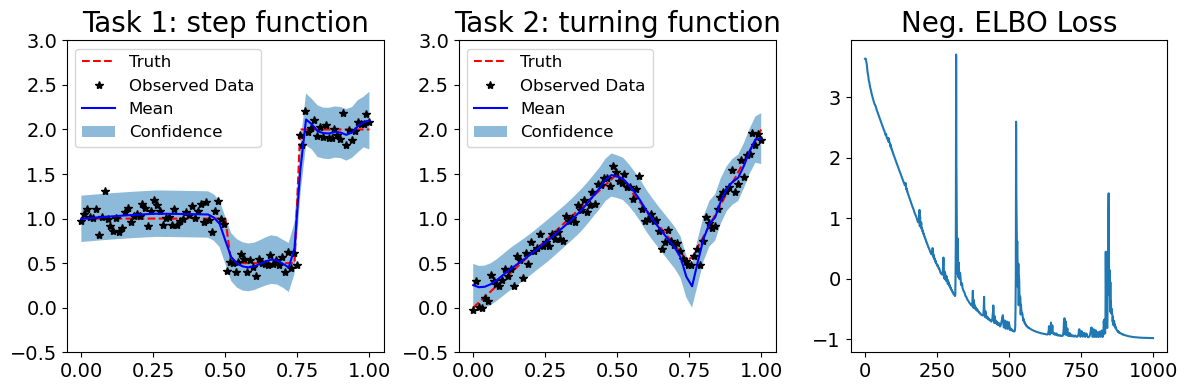}
\caption{DKL-GP regression.}
\label{fig:ts_DKLGP}
\end{subfigure}
\begin{subfigure}[b]{.495\textwidth}
\includegraphics[width=1\textwidth,height=.3\textwidth]{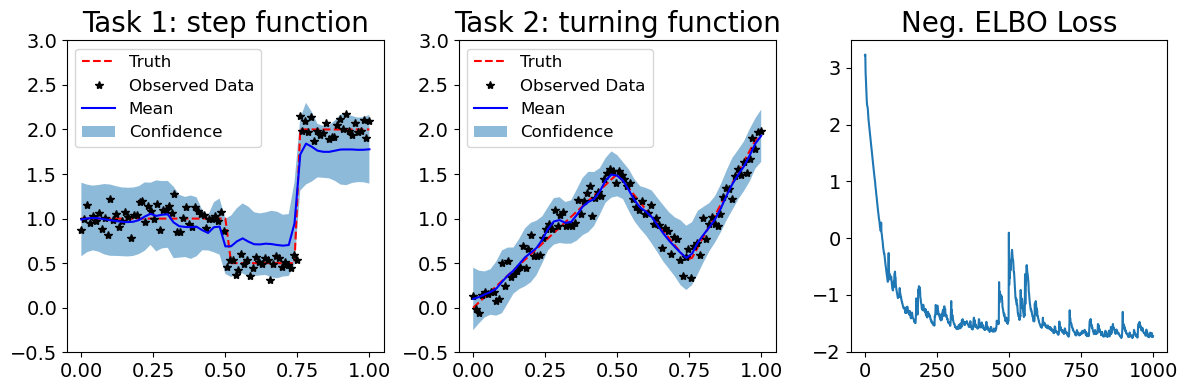}
\caption{DSPP regression.}
\label{fig:ts_DSPP}
\end{subfigure}
\caption{Comparing deep Q-EP \eqref{fig:ts_DQEP} with cutting-edge deep models including deep GP \eqref{fig:ts_DGP}, DKL-GP \eqref{fig:ts_DKLGP} and DSPP \eqref{fig:ts_DSPP} on modeling a 2d-output time series. Mean absolute errors (MAE) on testing data are 0.0494 (shallow GP), 0.0578 (shallow Q-EP), 0.0442 (deep GP), 0.0444 (deep Q-EP), 0.0536 (DKL-GP), 0.0896 (DSPP) respectively.}
\label{fig:ts_2doutput}
\end{figure}

These time series have abrupt changes in either values or directions, hence may pose challenges for standard GP as an $L_2$ penalty based regression method. 
As shown in Figure \ref{fig:ts_2doutput},
all the deep probabilistic models are comparatively better than their shallow (one-layer) versions. Among these models, deep Q-EP yields the most accurate prediction and the tightest uncertainty bound. The loss of deep Q-EP may not be comparable to that for other models because they are based on different probability distributions, 
and yet it converges fast and stably. DKL-GP converges the slowest and it suffers from unstable training issues. 

\subsection{UCI Regression Dataset}
Next, we test deep Q-EP on a series of benchmark regression datasets \citep{Wilson_2016,Jankowiak_2020b} from UCI machine learning repository. They are selected to represent small data (\emph{gas} with $N=2565$, $d=128$) and big data (\emph{song} with $N=515345$, $d=90$) cases. As in Table \ref{tab:uci_reg},  for most cases, deep Q-EP demonstrates superior or comparable performance measured by mean absolute error (MAE), standard deviation (STD) and negative log-likelihood (NLL) because the Q-EP prior provides crucial information. As the data volume increases, DNN feature extractor starts to catch up so that DKL-GP surpasses the vanilla deep Q-EP in the song dataset. Note, the GP component of DKL can be replaced with Q-EP to regularize the model. In our experiment, the resulting DKL-QEP beats DKL-GP with (MAE, STD, NLL)$=(0.327, 0.009, 0.59)$ on the \emph{protein} dataset and has comparable results on other big datasets. We will explore DKL-QEP further in Section \ref{sec:imgcls}.

\begin{table}[htbp]\tiny
\caption{\small Regression on UCI datasets: mean of absolute error (MAE), standard deviation (STD) and negative log-likelihood (NLL) values  by various deep models. Each result of the upper part is averaged over 10 experiments with different random seeds; values in the lower part are standard errors of these repeated experiments.}
\centering
\begin{tabular}{p{20pt}|p{27pt}|lll|lll|lll|lll}
\toprule
\multicolumn{2}{c|}{} & \multicolumn{3}{|c}{ Deep GP} & \multicolumn{3}{|c}{Deep Q-EP} & \multicolumn{3}{|c}{DKL-GP}  & \multicolumn{3}{|c}{DSPP}\\
\cmidrule{1-2} \cmidrule{3-5} \cmidrule{6-8} \cmidrule{9-11} \cmidrule{12-14}
Dataset &  N, d & MAE & STD & NLL &   MAE & STD & NLL &   MAE & STD & NLL &   MAE & STD & NLL \\
\midrule
gas & 2565, 128 & 0.19 & 0.06 & 0.4 & {\bf 0.14} & {\bf 0.03} & {\bf -0.6} & 0.93 & 0.07 & 2.23 & 0.33 & 0.35 & 18.54 \\
parkinsons & 5875, 20 & {\bf 8.17} &  0.61 & 168.12   & 8.49 &  {\bf 0.38} & 13  & 10.01 & 0.57 & {\bf 11.82} & 9.63 & 0.84 & 549.92 \\
elevators & 16599, 18 & 0.0639 & 0.014 & {\bf -1.04} & {\bf 0.0636} & {\bf 0.011} & -0.87 & 0.099 & 0.02 & -0.29 & 0.09 & 0.09 & 0.52 \\
protein & 45730, 9 & 0.39 & 0.05 & 0.76 & {\bf 0.35} & {\bf 0.014} & {\bf 0.7} & 0.37 & 0.02 & 0.77 & 0.48 & 0.21 & 100.66 \\
song & 515345, 90 & 0.38 & 0.011 & 0.69 & 0.4 & 0.011 & 0.92 & {\bf 0.35} & {\bf 0.008} & {\bf 0.63} & 0.43 & 0.2 & 261.3 \\
\midrule
gas & 2565, 128 & 0.07 & 0.02 & 0.16 & 0.03 & 0.01 & 0.24 & 0.36 & 0.02 & 1.04 & 0.24 & 0.13 & 22.06 \\
parkinsons & 5875, 20 & 1.38 &  0.16 & 97.06   & 1.74 &  0.11 & 3.42  & 1.55 & 0.25 & 4.89 & 1.51 & 0.29 & 349.22 \\
elevators & 16599, 18 & 3e-4 & 3e-4 & 7e-3 & 4e-4 & 3e-5 & 6e-3 & 0.06 & 0.05 & 1.32 & 0.02 & 0.02 & 0.64 \\
protein & 45730, 9 & 5e-3 & 4e-3 & 7e-3 & 5e-3 & 5e-4 & 0.01 & 0.09 & 6e-3 & 0.19 & 0.04 & 0.02 & 52.21 \\
song & 515345, 90 & 2e-3 & 1e-9 & 4e-3 & 0.04 & 3e-4 & 0.09 & 4e-3 & 1e-3 & 0.01 & 0.03 & 0.05 & 266.2 \\
\bottomrule
\end{tabular}
\label{tab:uci_reg}
\end{table}

\subsection{Classification}

Now we consider a simulated classification problem with labels created on annular regions of a rhombus:
\begin{equation*}
y_i = [\cos(0.4 * u * \pi \Vert\bx_i\Vert_1 ) ]+1, \quad u \sim \mathrm{Unif}[0,1], \quad \bx_i \sim \mN(\bzero, \bI_2), \quad i=1,\cdots, N,
\end{equation*}
where $[x]$ rounds $x$ to the nearest integer. We generate $N=500$ random data points according to the formula which results in 3 classes' labels as illustrated in the leftmost panel of Figure \ref{fig:cls_bdy}. Note, the class regions have clear shapes with edges and are not simply connected. Q-EP and deep Q-EP are superior than their GP rivals in modeling such inhomogeneous data. 
Indeed, Figure \ref{fig:cls_bdy} shows that even with small amount of data, Q-EP has better decision boundaries than GP and a 3-layer deeper Q-EP yields the best result closest to the truth among all the deep probabilistic models . This is further illustrated in Figure \ref{fig:cls_logits} where more fine details are revealed by the logits.


\begin{figure}[tp]
\includegraphics[width=.242\textwidth,height=.22\textwidth]{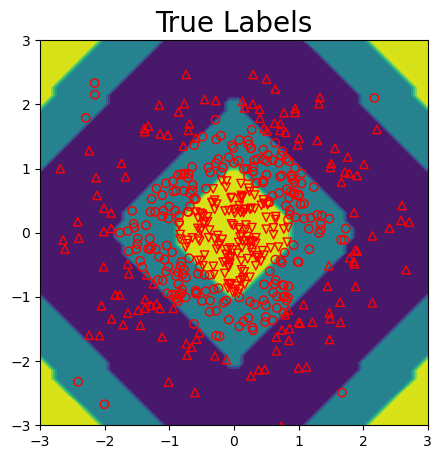}
\includegraphics[width=.242\textwidth,height=.22\textwidth]{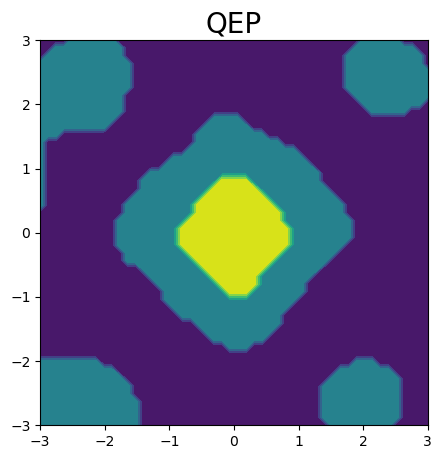}
\includegraphics[width=.242\textwidth,height=.22\textwidth]{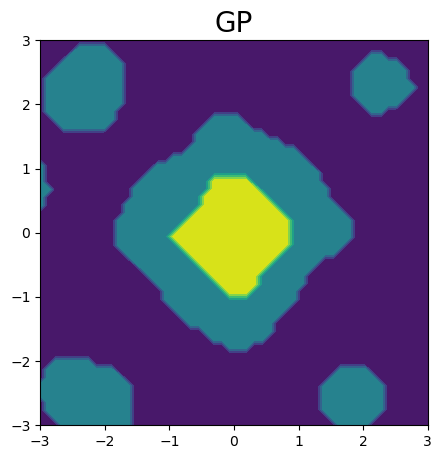}
\includegraphics[width=.242\textwidth,height=.22\textwidth]{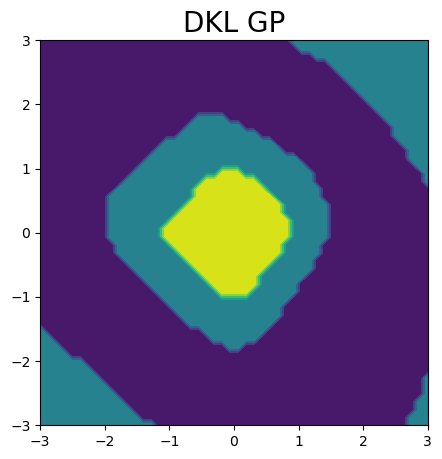}\\
\includegraphics[width=.242\textwidth,height=.22\textwidth]{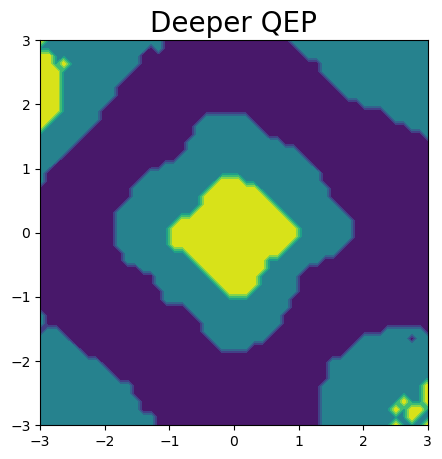}
\includegraphics[width=.242\textwidth,height=.22\textwidth]{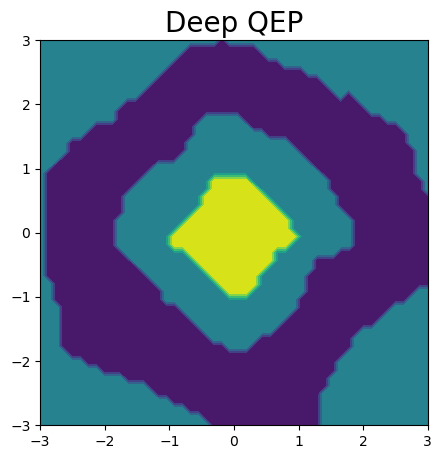}
\includegraphics[width=.242\textwidth,height=.22\textwidth]{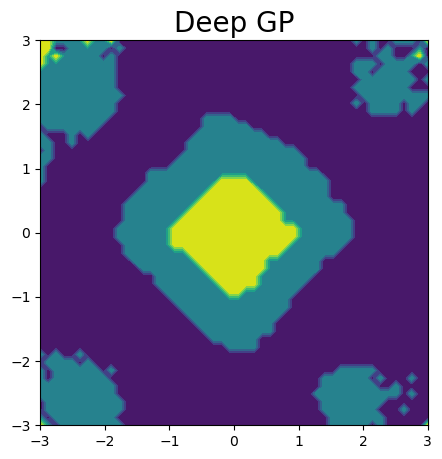}
\includegraphics[width=.242\textwidth,height=.22\textwidth]{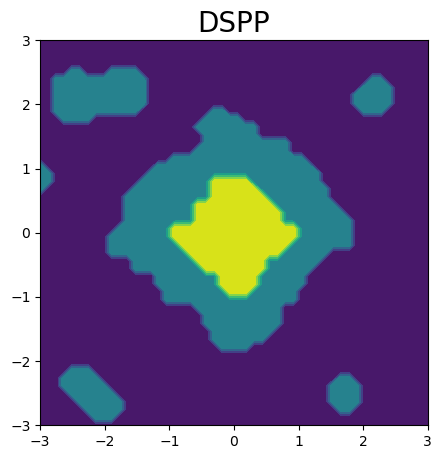}
\caption{Comparing shallow (1-layer), deep (2-layer) and deeper (3-layer) Q-EPs with GP, deep GP, DKL-GP and DSPP on a classification problem defined on annular rhombus. Circles, upper and lower triangles label three classes in the training data. Classification accuracy on testing data are $81.04\%$ (GP),  $82.2\%$ (Deep GP), $76.4\%$ (DKL-GP), $78.88\%$ (DSPP), $83.4\%$ (Q-EP), $85.64\%$ (Deep Q-EP) and $87.2\%$ (Deeper Q-EP) respectively.} 
\label{fig:cls_bdy}
\end{figure}

\begin{table}[htbp]\tiny
\caption{\small Classification on UCI datasets: accuracy (ACC), area under ROC curve (AUC) and negative log-likelihood (NLL) values  by various deep models. Each result of the upper part is averaged over 10 experiments with different random seeds; values in the lower part are standard errors of these repeated experiments.}
\centering
\begin{tabular}{p{25pt}|p{30pt}|lll|lll|lll|lll}
\toprule
\multicolumn{2}{c|}{} & \multicolumn{3}{|c}{ Deep GP} & \multicolumn{3}{|c}{Deep Q-EP} & \multicolumn{3}{|c}{DKL-GP}  & \multicolumn{3}{|c}{DSPP}\\
\cmidrule{1-2} \cmidrule{3-5} \cmidrule{6-8} \cmidrule{9-11} \cmidrule{12-14}
Dataset &  N, d, k & ACC & AUC & NLL &   ACC & AUC & NLL &   ACC & AUC & NLL &   ACC & AUC & NLL \\
\midrule
haberman & 306, 3, 2 & 0.727 &  0.46 & 7.16   & {\bf 0.732} &  {\bf 0.505} & {\bf 6.44}  & 0.702 & 0.43 & 6.93 & 0.716 & 0.496 & 31.58 \\
tic-tac-toe & 957, 27, 2 & 0.971 & 0.52 & 67.57 & {\bf 0.972} & 0.53 & 48.69 & 0.922 & {\bf 0.67} & 15.8 & 0.736 & 0.5 & 430.25 \\
car & 1728, 21, 4 & {\bf 0.99} & {\bf 0.9999} & 501.9 & 0.983 & 0.999 & 1237.08 & 0.929 & 0.98 & {\bf 46.71} & 0.758 & 0.85 & 4.6e4 \\
seismic & 2583, 24, 2 & 0.931 & 0.28 & 11.75 & {\bf 0.934} & 0.44 & 10.69 & 0.931 & 0.44 & {\bf 9.43} & 0.849 & {\bf 0.52} & 3.7e4 \\
nursery & 12959, 27, 5 & {\bf 0.9996} & {\bf 0.97} & 2.1e5 & {\bf 0.9996} & 0.95 & 1.1e4 & 0.486 & 0.7 & {\bf 2.7e3} & 0.717 & 0.84 & 1.5e5 \\
\midrule
haberman & 306, 3, 2 & 0.01 &  0.08 & 0.68   & 0.02 &  0.07 & 0.61  & 0.04 & 0.09 & 1 & 0.03 & 0.05 & 50.73 \\
tic-tac-toe & 957, 27, 2 & 0.02 & 0.08 & 20.68 & 0.04 & 0.37 & 13.25 & 0.19 & 0.15 & 4.22 & 0.23 & 0.44 & 73.5 \\
car & 1728, 21, 4 & 9e-3 & 2e-4 & 65.67 & 7e-3 & 1e-3 & 572.46 & 0.09 & 0.03 & 15.41 & 0.22 & 0.18 & 2.6e4 \\
seismic & 2583, 24, 2 & 0.002 & 0.02 & 1.25 & 0.0 & 0.1 & 0.9 & 0.006 & 0.08 & 1.48 & 0.27 & 0.13 & 1.7e4 \\
nursery & 12959, 27, 5 & 6e-8 & 0.04 & 4.5e4 & 6e-8 & 0.03 & 2.5e3 & 0.36 & 0.31 & 6e3 & 0.18 & 0.08 & 1e5 \\
\bottomrule
\end{tabular}
\label{tab:uci_cls}
\end{table}

We also compare deep Q-EP with other deep probabilistic models on several benchmark classification datasets with different sizes from UCI machine learning repository.
Table \ref{tab:uci_cls} summarizes the comparison results in terms of accuracy (ACC), area under the curve (AUC) of receiver operating characteristic (ROC) and NLL. Deep Q-EP still excels in most cases or has comparable performance, further supporting its advantage in the classification task.

\subsection{Image Classification}\label{sec:imgcls}

Finally, we test the proposed models on some benchmark image classification datasets, MNIST (60,000 training and 10,000 testing $28\times 28$ handwritten digits) and CIFAR-10 (50,000 training and 10,000 testing $32\times 32$ color images with 10 classes).
As shown in Figure \ref{fig:imgcls},
while deep GP and deep Q-EP have mediocre classification accuracy, deep kernel learning \citep[DKL][]{Wilson_2016} with CNN (common structure for these benchmarks) prefixed as a feature extractor works much better in both tasks.
On MNIST dataset, DKL-GP has a $98.14\%$ and DKL-QEP achieves a $98.19\%$ test accuracy, improving vanilla CNN with $97.69\%$ accuracy.
On CIFAR-10, DKL-GP has accuracy $70\%$ and DKL-QEP improves it to $73.4\%$, both having a good margin of advantage compared with vanilla CNN with $63.46\%$.
Note, here we choose a relatively small CNN to demonstrate the improvement by adopting DKL with Q-EP even better than DKL-GP.

\begin{figure}[tbp]
\begin{subfigure}[b]{.495\textwidth}
\includegraphics[width=1\textwidth,height=.4\textwidth]{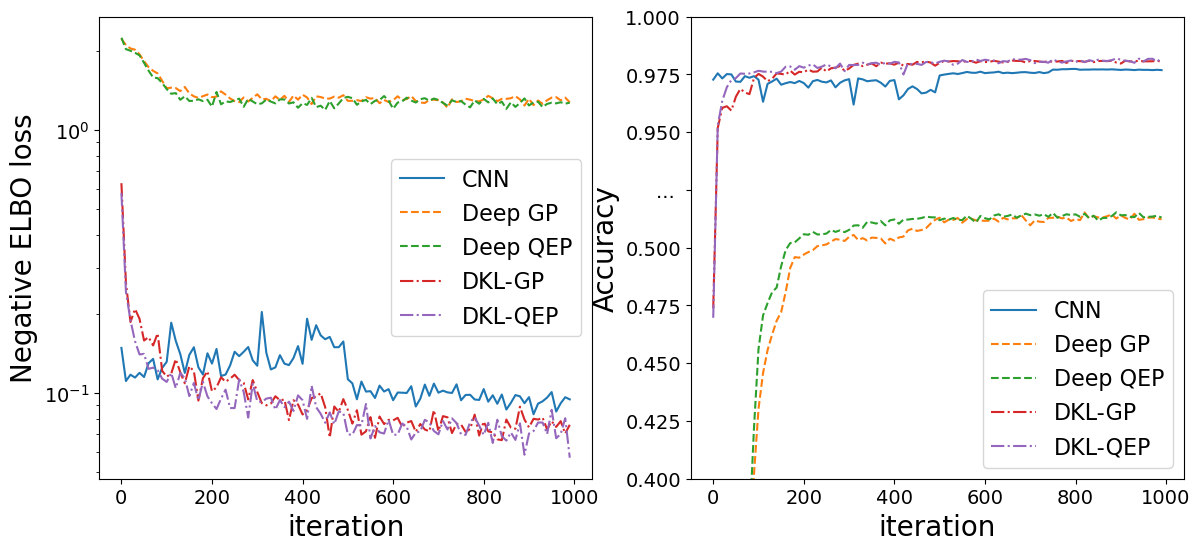}
\caption{MNIST digit classification.}
\label{fig:mnist}
\end{subfigure}
\begin{subfigure}[b]{.495\textwidth}
\includegraphics[width=1\textwidth,height=.4\textwidth]{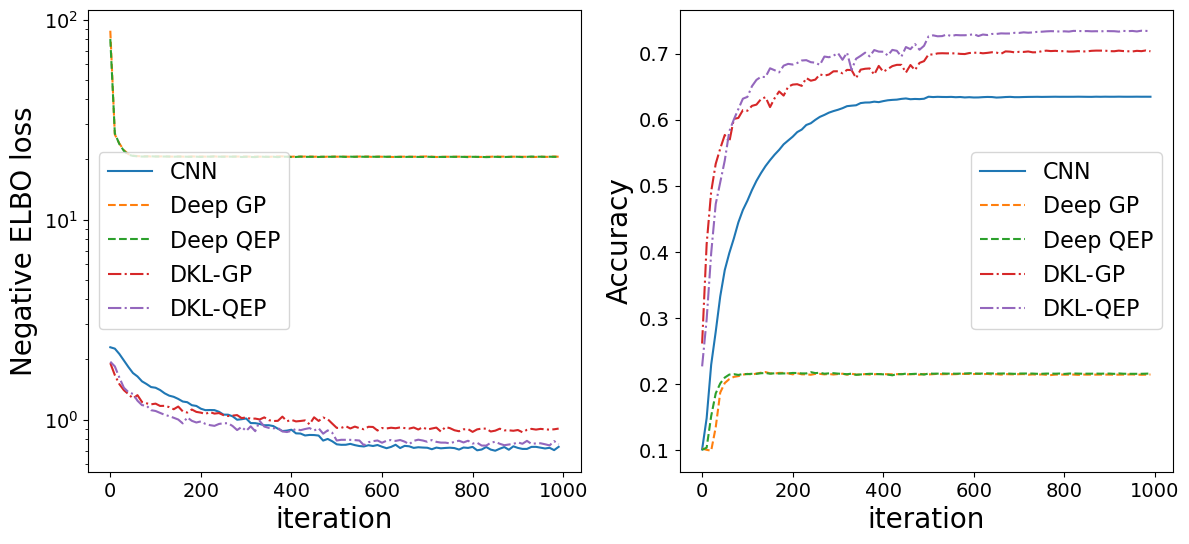}
\caption{CIFAR-10 classification.}
\label{fig:cifar10}
\end{subfigure}
\caption{Comparing DKL-QEP and DKL-GP with CNN on two benchmark classification problems.}
\label{fig:imgcls}
\end{figure}

\section{Conclusion}\label{sec:conclusion}
In this paper, we generalize Q-EP to deep Q-EP, which includes deep GP as a special case. Moreover, deep Q-EP inherits the flexible regularization controlled a parameter $q>0$, which is advantageous in learning latent representations and modeling data inhomogeneity. We first generalize Bayesian GP-LVM to Bayesian QEP-LVM (as shallow Q-EP layer) and develop the variational inference for it. Then we stack multiple shallow Q-EP layer to build the deep Q-EP model. The novel deep model demonstrates substantial numerical benefits in various learning tasks and can be combined with neural network for better characterizing the latent representation of big data applications. 

As common in GP and NN models, we do observe multi-modality of the posterior distributions, especially in the hyper-parameter spaces. Sub-optimal solutions can appear in the stochastic training process. These issues can be alleviated by dispersed or diversified initialization, or with adaptive training schedulers.
One potential application of deep Q-EP is the inverse learning, similarly as done by deep GP \citep{Jin_2015,Abraham_2023}. Theory of the contraction properties \citep{Finocchio_2023} is also an interesting research direction.



\clearpage

\bibliographystyle{plain}
\bibliography{ref}

\newpage
\appendix
\onecolumn

\counterwithin{table}{section}
\counterwithin{figure}{section}

\begin{center}
    \Large Supplement Document for ``Deep Q-Exponential Processes"
\end{center}

 \section{PROOFS}\label{apx:prob_PCA}

 \begin{proof}[Proof of Theorem \ref{thm:prob_PCA}]
 The gradients of log-likelihood \eqref{eq:log-likelihood} with respect to $\bK$ can be found as
 \begin{equation}\label{eq:gradwK}
     \frac{\partial L}{\partial \bK} = -\frac{D}{2} \bK^{-1} - \left[\frac{ND}{2}\left(\frac{q}{2}-1\right) r^{-1} -\half\frac{q}{2}r^{\frac{q}{2}-1}\right] \bK^{-1}\bY\tp{\bY}\bK^{-1}.
 \end{equation}

 The MLE for $\bX$ can be found by setting $\frac{\partial L}{\partial \bX} = 2\alpha^{-1} \frac{\partial L}{\partial \bK} \bX=0$, which leads to
 \begin{equation}\label{eq:matrixeq}
     \bX = \left[-N\left(\frac{q}{2}-1\right) r^{-1} +\frac{1}{D}\frac{q}{2}r^{\frac{q}{2}-1}\right] \bY\tp{\bY}\bK^{-1} \bX.
 \end{equation}

 Now suppose we have the formal solution for \eqref{eq:matrixeq} as $\bX=\bU \bL \bV$, where $\bL$ is an $N\times Q$ matrix whose only nonzero entries $\{l_i\}$ are on the main diagonal to be determined.
 Based on the formal solution of $\bX$, we have
 \begin{equation*}
     \bK = \bU(\alpha^{-1}\bL\tp{\bL}+\beta^{-1}\bI)\tp{\bU}, \quad r(\bY) = \tr(\bK^{-1}\bY\tp{\bY}) = \sum_{i=1}^{D\wedge Q} \lambda_i(\alpha^{-1} l_i^2+\beta^{-1})^{-1}.
 \end{equation*}
 Denote by $h(r):= -N\left(\frac{q}{2}-1\right) r^{-1} +\frac{1}{D}\frac{q}{2}r^{\frac{q}{2}-1}$. We substitute the above quantities into \eqref{eq:matrixeq} and get
 \begin{equation*}
     \bU \bL \bV = \bU \vect\Lambda (\alpha^{-1}\bL\tp{\bL}+\beta^{-1}\bI)^{-1} \bL \bV h(r) ,
 \end{equation*}
 which reduces to
 \begin{equation*}
     l_i = \lambda_i(\alpha^{-1} l_i^2+\beta^{-1})^{-1} l_i h(r), \quad i=1,\cdots, D\wedge Q.
 \end{equation*}
 Let $h(r)=c$ with $c$ to be determined. 
 Assume $l_i\neq 0$. Then we can solve $l_i=\sqrt{\alpha(c\lambda_i-\beta^{-1})}$.
 This yields $r=c^{-1} (D\wedge Q)$. Hence
 \begin{equation*}
     h(r) = -N\left(\frac{q}{2}-1\right) c (D\wedge Q)^{-1} +\frac{1}{D}\frac{q}{2}(D\wedge Q)^{\frac{q}{2}-1} c^{1-\frac{q}{2}} = c.
 \end{equation*}
 And it solves 
 $c=\left[\frac{q(D\wedge Q)^{\frac{q}{2}}}{2D(D\wedge Q) + (q-2) ND}\right]^{\frac{2}{q}}$. Hence the proof is completed.

 \end{proof}

\section{Computation of Variational Lower Bounds}
\subsection{Shallow Q-EP}\label{sec:elbo_qeplvm}
The variational lower bound for the log-evidence is
\begin{equation*}
    \log p(\bY) \geq \mL(q):= \int q(\bX)\log\frac{p(\bY|\bX)p(\bX)}{q(\bX)} d\bX = \tilde{\mL}(q) - \KL(q(\bX)\Vert p(\bX)),
\end{equation*}
where the first term $\tilde{\mL}(q)=\int q(\bX)\log p(\bY|\bX) d\bX$ is intractable and hence difficult to bound.

\subsubsection{Lower bound for the marginal likelihood}

To address such intractability issue and speed up the computation, sparse variational approximation \citep{Titsias_2009, Lawrence_2007} is adopted by introducing a set of inducing points $\tilde\bX\in\mbR^{M\times Q}$ with their function values $\bU=[f_1(\tilde\bX),\cdots,f_D(\tilde\bX)]\in\mbR^{M\times D}$.
Hence the marginal likelihood $p(\bY|\bX)$ defined in \eqref{eq:Bayes_LVM} can be augmented to the following joint distribution each being a $\qED$:
\begin{equation*}
    p(\bY|\bX)\propto p(\bY|\bF) p(\bF|\bU,\bX,\tilde\bX) p(\bU|\tilde\bX),
\end{equation*}
where we have $\VEC(\bU)|\tilde\bX\sim \qED(\bzero, \bI_D\otimes\bK_{MM})$ and the conditional distribution
\begin{equation}\label{eq:cond_qed}
    \VEC(\bF)|\bU,\bX,\tilde\bX \sim \qED(\VEC(\bK_{NM}\bK_{MM}^{-1}\bU), \bI_D\otimes(\bK_{NN}-\bK_{NM}\bK_{MM}^{-1}\bK_{MN})).
\end{equation}
The inducing points $\tilde\bX$ are regarded as variational parameters and hence they are dropped from the following probability expressions.
We then approximate $p(\bF,\bU|\bX)\propto p(\bF|\bU,\bX) p(\bU)$ with $q(\bF,\bU)=p(\bF|\bU,\bX)q(\bU)$ in another variational Bayes as follows
\begin{equation}\label{eq:marginal_elbo}
\begin{aligned}
\log p(\bY|\bX) &\geq \int q(\bF,\bU)\log \frac{p(\bY|\bF) p(\bF|\bU,\bX) p(\bU)}{q(\bF,\bU)} d\bF d\bU \\
    &= \int p(\bF|\bU) q(\bU) d\bU \log p(\bY|\bF) d\bF + \int q(\bU) \log \frac{p(\bU)}{q(\bU)} d\bU. \\
\end{aligned}
\end{equation}

Different from \cite{Titsias_2009, Titsias_2010} using the variational calculus, \cite[SVGP][]{Hensman_2015} computes the marginal likelihood ELBO \eqref{eq:marginal_elbo} in two stages.
Instead of the variational free form, we follow \cite{Hensman_2015} to use the variational distribution for $\bU$ of the following format conjugate to $p(\bF|\bU)$:
\begin{equation}\label{eq:var_dist_inducing}
    q(\bU) \sim \qED(\bM, \diag(\{\bSigma_d\})).
\end{equation}
Noticing that $\bF|\bU$ follows a conditional $q$-exponential \eqref{eq:cond_qed}, we can obtain the variational distribution of $\bF$, $q(\bF)$, by marginalizing $\bU$ out as follows
\begin{equation*}
\begin{aligned}
    &q(\bF) = \int q(\bF, \bU) d\bU = \int p(\bF|\bU) q(\bU) d\bU \\
    \sim &\qED(\VEC(\bK_{NM}\bK_{MM}^{-1}\bM), \\
    &\phantom{\qED(} \bI_D\otimes(\bK_{NN}-\bK_{NM}\bK_{MM}^{-1}\bK_{MN})+ \diag(\{\bK_{NM}\bK_{MM}^{-1}\bSigma_d \bK_{MM}^{-1} \bK_{MN}\}) ). 
\end{aligned}
\end{equation*}
Therefore, the variational lower bound of the marginal likelihood \eqref{eq:marginal_elbo} becomes
\begin{equation*}
    \log p(\bY|\bX) \geq \langle \log p(\bY|\bF) \rangle_{q(\bF)} - \KL(q(\bU) \Vert p(\bU)).
\end{equation*}

Denote by $\log p(\bY|\bF) = \varphi(r(\bY,\bF))$, where $\varphi(r):=\frac{DN}{2}\log \beta + \frac{ND}{2}\left(\frac{q}{2}-1\right) \log r - \half r^{\frac{q}{2}}$ is convex for $q\in (0,2]$, and $r(\bY,\bF) = \tp{\VEC(\bY-\bF)} (\beta^{-1}\bI_{ND})^{-1}\VEC(\bY-\bF)= \beta\tr((\bY-\bF)\tp{(\bY-\bF)})$ is a quadratic form of random variable $\bY$.
Therefore, by Jensen's inequality, we can bound from below as
\begin{equation*}
    \langle \log p(\bY|\bF) \rangle_{q(\bF)} = \langle \varphi(r(\bY,\bF)) \rangle_{q(\bF)} 
    \geq \varphi(\langle r(\bY,\bF)\rangle_{q(\bF)}).
\end{equation*}
where we can calculate the expectation of the quadratic form $r(\bY, \bF)$ as 
\begin{equation*}
\begin{aligned}
    \langle r(\bY,\bF)\rangle_{q(\bF)} =& r(\bY, \bK_{NM}\bK_{MM}^{-1}\bM) + \beta D\tr(\bK_{NN}-\bK_{NM}\bK_{MM}^{-1}\bK_{MN}) \\
    &+ \beta\sum_{d=1}^D \tr(\bK_{NM}\bK_{MM}^{-1}\bSigma_d \bK_{MM}^{-1}\bK_{MN}).
\end{aligned}
\end{equation*}

Denote by $h(\bY, \bX) = \langle \langle \log p(\bY|\bF)\rangle_{q(\bF)} \rangle_{q(\bX)}$. Then by Jensen's inequality again we have
\begin{equation*}
    h(\bY, \bX) \geq \varphi(\langle \langle r(\bY,\bF)\rangle_{q(\bF)} \rangle_{q(\bX)}) =: h^*(\bY, \bX).
\end{equation*}
Define $\psi_0=\tr(\langle\bK_{NN}\rangle_{q(\bX)})$, $\Psi_1= \langle\bK_{NM}\rangle_{q(\bX)}$, and $\Psi_2 = \langle\bK_{MN}\bK_{NM}\rangle_{q(\bX)}$.
Further we calculate the expectations of quadratic terms similarly
\begin{equation}\label{eq:condquadint_twostage}
\begin{aligned}
    \langle\langle r(\bY,\bF)\rangle_{q(\bF)}\rangle_{q(\bX)} =& \langle r(\bY, \bK_{NM}\bK_{MM}^{-1}\bM)\rangle_{q(\bX)} + \beta D[\psi_0-\tr(\bK_{MM}^{-1}\Psi_2)]\\
    & + \beta\sum_{d=1}^D \tr(\bK_{MM}^{-1}\bSigma_d \bK_{MM}^{-1}\Psi_2),\\
    \langle r(\bY, \bK_{NM}\bK_{MM}^{-1}\bM)\rangle_{q(\bX)} =& r(\bY, \Psi_1\bK_{MM}^{-1}\bM) + \beta\tr( \tp{\bM}\bK_{MM}^{-1}(\Psi_2-\tp{\Psi}_1\Psi_1)\bK_{MM}^{-1}\bM).
\end{aligned}
\end{equation}


We also need to compute the K-L divergence $\KL_\bU:=\KL(q(\bU)\Vert p(\bU))$
\begin{equation*}
    \KL_\bU = \int q(\bU) \log q(\bU) d\bU - \int q(\bU) \log p(\bU) d\bU = -\mH_q(\bU) - \langle \log p(\bU)\rangle_{q(\bU)}.
\end{equation*}
Denote by $r=\tp{\tp{\VEC}(\bU-\bM)}\diag(\{\bSigma_d\})^{-1}\tp{\VEC}(\bU-\bM)$. Then $\log q(\bU)=-\half \sum_{d=1}^D\log |\bSigma_d| + \frac{MD}{2}\left(\frac{q}{2}-1\right) \log r - \half r^{\frac{q}{2}}$. From 
\citep[Proposition A.1. of][]{Li_2023}
we know that $r^{\frac{q}{2}}\sim \chi^2(MD)$. Therefore
\begin{equation*}
\begin{aligned}
    &\mH_q(\bU) = \half \sum_{d=1}^D\log |\bSigma_d| + \frac{MD}{2}\left(\frac{q}{2}-1\right)\frac{2}{q} \mH(\chi^2(MD)) + \frac{MD}{2}\\
    &= \half \sum_{d=1}^D\log |\bSigma_d| + \frac{MD}{2}\left(1-\frac{2}{q}\right)\left[\frac{MD}{2}+\log\left( 2\Gamma\left(\frac{MD}{2}\right)\right)+\left(1-\frac{MD}{2}\right)\psi\left(\frac{MD}{2}\right)\right] + \frac{MD}{2}.
\end{aligned}
\end{equation*}
Denote by $\varphi_0(r):=-\frac{D}{2}\log |\bK_{MM}| + \frac{MD}{2}\left(\frac{q}{2}-1\right) \log r - \half r^{\frac{q}{2}}$. Then by Jensen's inequality
\begin{equation*}
\begin{aligned}
    \langle \log p(\bU)\rangle_{q(\bU)} &= \langle \varphi_0(\tr(\tp{\bU}\bK_{MM}^{-1}\bU))\rangle_{q(\bU)} \geq \varphi_0(\langle\tr(\tp{\bU}\bK_{MM}^{-1}\bU)\rangle_{q(\bU)}), \\
    \langle\tr(\tp{\bU}\bK_{MM}^{-1}\bU)\rangle_{q(\bU)} &= \tr(\tp{\bM}\bK_{MM}^{-1}\bM) + \sum_{d=1}^D\tr(\bSigma_d\bK_{MM}^{-1}).
\end{aligned}
\end{equation*}

The elements of $\psi_0$, $\Psi_1$ and $\Psi_2$ can be computed as
\begin{equation*}
\begin{aligned}
    \psi_0^n &= \int k(\bx_n, \bx_n) \qED(\bx_n |\bmu_n, \bS_n) d\bx_n, \\
    (\Psi_1)_{nm} &= \int k(\bx_n, \bz_m) \qED(\bx_n |\bmu_n, \bS_n) d\bx_n, \\
    (\Psi_2^n)_{mm'} &= \int k(\bx_n, \bz_m)k(\bz_{m'}, \bx_n) \qED(\bx_n |\bmu_n, \bS_n) d\bx_n.
\end{aligned}
\end{equation*}
With ARD SE kernel \eqref{eq:ARD}, we have $\psi_0=N\alpha^{-1}$.
While the integration in $\Psi_1$ and $\Psi_2$ is intractable in general,
we can compute them using Monte Carlo approximation.
Alternatively, we approximate
\begin{equation*}
\begin{aligned}
    (\Psi_1)_{nm} &\approx \alpha^{-1} \exp\left\{-\half \langle \tp{(\bx_n-\bz_m)} \diag(\bgamma) (\bx_n-\bz_m)\rangle_{q(\bx_n)} \right\} \\
    &= \alpha^{-1} \exp\left\{-\half [\tp{(\bmu_n-\bz_m)} \diag(\bgamma) (\bmu_n-\bz_m) + \tr(\diag(\bgamma)\bS_n)]\right\}, \\
    (\Psi_2^n)_{mm'} &\approx \alpha^{-2} \exp\left\{-\half \sum_{\tilde{m}=m, m'}\tp{(\bmu_n-\bz_{\tilde m})} \diag(\bgamma) (\bmu_n-\bz_{\tilde m})) + \tr(\diag(\bgamma)\bS_n)\right\}. \\
\end{aligned}
\end{equation*}
If we use the ARD linear form, $k(\bx, \bx') = \tp{\bx} \diag(\bgamma) \bx'$, then we have
\begin{equation*}
\begin{aligned}
     \psi_0^n &= \tr(\diag(\bgamma)(\bmu_n\tp{\bmu}_n+\bS_n)), \quad (\Psi_1)_{nm} = \tp{\bmu}_n\diag(\bgamma)\bz_m, \\
     (\Psi_2^n)_{mm'} &= \tp{\bz}_m\diag(\bgamma)(\bmu_n\tp{\bmu}_n+\bS_n)\diag(\bgamma) \bz_{m'}.
\end{aligned}
\end{equation*}

\subsubsection{Lower bound for the K-L divergence added terms}

Lastly, we need to compute the K-L divergence
\begin{equation*}
    \KL(q(\bX)\Vert p(\bX)) = \int q(\bX) \log q(\bX) d\bX - \int q(\bX) \log p(\bX) d\bX = -\mH_q(\bX) - \langle \log p(\bX)\rangle_{q(\bX)}.
\end{equation*}
Denote by $r=\tp{\VEC(\bX-\bmu)}\diag(\{\bS_n\})^{-1}\VEC(\bX-\bmu)$. Then $\log q(\bX)=-\half \sum_{n=1}^N\log |\bS_n| + \frac{NQ}{2}\left(\frac{q}{2}-1\right) \log r - \half r^{\frac{q}{2}}$. From 
\citep[Proposition A.1. of][]{Li_2023}
we know that $r^{\frac{q}{2}}\sim \chi^2(NQ)$. Therefore
\begin{equation*}
\begin{aligned}
    &\mH_q(\bX) = \half \sum_{n=1}^N\log |\bS_n| + \frac{NQ}{2}\left(\frac{q}{2}-1\right)\frac{2}{q} \mH(\chi^2(NQ)) + \frac{NQ}{2}\\
    &= \half \sum_{n=1}^N\log |\bS_n| + \frac{NQ}{2}\left(1-\frac{2}{q}\right)\left[\frac{NQ}{2}+\log\left( 2\Gamma\left(\frac{NQ}{2}\right)\right)+\left(1-\frac{NQ}{2}\right)\psi\left(\frac{NQ}{2}\right)\right] + \frac{NQ}{2}.
\end{aligned}
\end{equation*}
Denote by $\varphi_0(r):=\frac{NQ}{2}\left(\frac{q}{2}-1\right) \log r - \half r^{\frac{q}{2}}$. Then by Jensen's inequality
\begin{equation*}
\begin{aligned}
\langle \log p(\bX)\rangle_{q(\bX)} &= \langle \varphi_0(\tr(\tp{\bX}\bX))\rangle_{q(\bX)} \geq \varphi_0(\langle\tr(\tp{\bX}\bX)\rangle_{q(\bX)}), \\
\langle\tr(\tp{\bX}\bX)\rangle_{q(\bX)} &= \tr(\tp{\bmu}\bmu) + \sum_{n=1}^N\tr(\bS_n).
\end{aligned}
\end{equation*}


\subsection{Deep Q-EP}\label{sec:elbo_deepqep}
We only consider the hierarchy of two QEP-LVMs because the general $L$-layers follows by induction:
\begin{equation}\label{eq:2stacks}
\begin{aligned}
    y_{nd} &= f_d^Y(\bx_n) + \eps_{nd}^Y, \quad d=1,\cdots, D, \quad \bx_n\in \mbR^Q, \\
    x_{nq} &= f_q^X(\bz_n) + \eps_{nq}^X, \quad q=1,\cdots, Q, \quad \bz_n\in \mbR^{Q_Z},
\end{aligned}
\end{equation}
where $\vect{\eps}^Y\sim \qED(\bzero, \Gamma^Y)$, $\vect{\eps}^X\sim \qED(\bzero, \Gamma^X)$, $f^Y\sim \qEP(0, k^Y)$ and $f^X\sim \qEP(0, k^X)$.
Consider the prior $\bZ\sim \qED(\bzero, \bI_{NQ_Z})$.
The variational inference for $p(\bZ|\bY)$ requires maximizing the following ELBO
\begin{equation}\label{eq:elbo}
    \log p(\bY) \geq \mL(\mQ) := \int_{\bZ,\bF^X,\bX,\bF^Y} \mQ\log \frac{p(\bY,\bF^Y,\bX,\bF^X,\bZ)}{\mQ},
\end{equation}
where the joint probability can be decomposed
\begin{equation*}
    p(\bY,\bF^Y,\bX,\bF^X,\bZ) = p(\bY|\bF^Y) p(\bF^Y|\bX) \cdot p(\bX|\bF^X)p(\bF^X|\bZ)p(\bZ)
\end{equation*}
Similarly as in Section \ref{sec:Bayes_LVM}, sparse variational approximation \citep{Titsias_2010} is adopted to introduce inducing points $\tilde{\bX}\in\mbR^{M\times Q}, \tilde{\bZ}\in\mbR^{M\times Q_Z}$ with associated function values $\bU^Y\in\mbR^{M\times D}, \bU^X\in\mbR^{M\times Q}$ respectively. Hence the augmented probability replaces the joint probability:
\begin{equation*}
\begin{aligned}
    p(\bY,\bF^Y,\bX,\bF^X,\bZ,\bU^Y,\bU^X) =& p(\bY|\bF^Y) p(\bF^Y|\bU^Y,\bX)p(\bU^Y|\tilde{\bX}) \cdot \\
    &p(\bX|\bF^X)p(\bF^X|\bU^X,\bZ)p(\bU^X|\tilde{\bZ})p(\bZ),
\end{aligned}
\end{equation*}
where $\bF^Y$ and $\bU^Y$ are drawn from the same Q-EP; and similarly are $\bF^X$ and $\bU^X$.
Now we specify the approximation distribution as
\begin{equation*}
    \mQ=p(\bF^Y|\bU^Y, \bX) q(\bU^Y) q(\bX)\cdot p(\bF^X|\bU^X, \bZ) q(\bU^X) q(\bZ).
\end{equation*}
and choose $q(\bU^Y)$ and $q(\bU^X)$, 
and $q(\bX)$ and $q(\bZ)$ to be uncorrelated $\qED$'s: 
\begin{equation*}
\begin{aligned}
    q(\bU^Y) \sim \qED(\bM^Y, \diag(\{\bSigma_d^Y\})), &\quad q(\bU^X) \sim \qED(\bM^X, \diag(\{\bSigma_d^X\})), \\
    q(\bX) \sim \qED(\bmu^X, \diag(\{\bS^X_n\})), &\quad q(\bZ) \sim \qED(\bmu^Z, \diag(\{\bS^Z_n\})).
\end{aligned}
\end{equation*}

Then the ELBO \eqref{eq:elbo} becomes
\begin{equation*}
\begin{aligned}
    \mL(\mQ) &:= \int_{\bZ,\bU^X,\bF^X,\bX,\bU^Y,\bF^Y} \mQ\log \frac{p(\bY|\bF^Y)p(\bU^Y) p(\bX|\bF^X)p(\bU^X)p(\bZ)}{q(\bU^Y)q(\bX) q(\bU^X)q(\bZ)} \\
    &= h(\bY, \bX) - \KL_{\bU^Y} + h(\bX, \bY) - \KL_{\bU^X} +\mH_q(\bX) - \KL_\bZ,
\end{aligned}
\end{equation*}
where we have
\begin{equation*}
    h(\bY,\bX) 
    = \left\langle \log p(\bY|\bF^Y) \right\rangle_{q(\bF^Y)q(\bX)}, \quad 
    h(\bX, \bZ)
    = \left\langle \log p(\bX|\bF^X) \right\rangle_{q(\bF^X)q(\bX)q(\bZ)} .
\end{equation*}
Note, $h(\bY,\bX)\geq  h^*(\bY, \bX)$ is the same as in the bound \eqref{eq:LVM_ELBO} for Bayesian LVM. However, $h(\bX, \bZ)$ has an extra integration with respect to $q(\bX)$. Replacing $\bX$ with $\bZ$ and $\bY$ with $\bX$ in \eqref{eq:condquadint_twostage}, we compute
\begin{equation*}
    \langle r(\bX, \Psi_1(\bK^X_{MM})^{-1}\bU^X)\rangle_{q(\bX)} 
    = r(\bmu^X, \Psi_1(\bK^X_{MM})^{-1}\bU^X) 
    + \tr((\bI_D\otimes (\Gamma^X)^{-1}) \diag(\{\bS_n^X\})).
\end{equation*}
Therefore we have a updated bound for $h(\bX, \bZ)\geq h^*(\bX, \bZ)=\varphi(r_{\bmu^X}; \Gamma^X, Q)$, where
\begin{equation*}
\begin{aligned}
    r_{\bmu^X} =& r(\bmu^X, \Psi_1(\bK_{MM}^X)^{-1}\bM^X) + \tr( \tp{(\bM^X)}(\bK_{MM}^X)^{-1}(\Psi_2^X-\tp{\Psi}_1(\Gamma^X)^{-1}\Psi_1)(\bK_{MM}^X)^{-1}\bM^X) \\
    &+ Q[\psi_0-\tr((\bK_{MM}^X)^{-1}\Psi_2^X)] + \sum_{d=1}^Q \tr((\bK_{MM}^X)^{-1}\bSigma_d^X (\bK_{MM}^X)^{-1}\Psi_2^X) \\
    &+ \tr((\bI_Q\otimes (\Gamma^X)^{-1}) \diag(\{\bS_n^X\})).
\end{aligned}
\end{equation*}

Lastly, we have
\begin{equation*}
    \mH_q(\bX) \geq \half \sum_{n=1}^N\log |\bS_n^X|, \; - \KL(q(\bZ)\Vert p(\bZ)) \geq \half \sum_{n=1}^N\log |\bS_n^Z| + \varphi_0(\tr(\tp{(\bmu^Z)}\bmu^Z) + \sum_{n=1}^N\tr(\bS_n^Z)),
\end{equation*}
where $\varphi_0(r):=\frac{NQ_Z}{2}\left(\frac{q}{2}-1\right) \log r - \half r^{\frac{q}{2}}$.


\section{More Numerical Results}

\begin{figure}[htbp]
\begin{subfigure}[b]{.495\textwidth}
\includegraphics[width=1\textwidth,height=.3\textwidth]{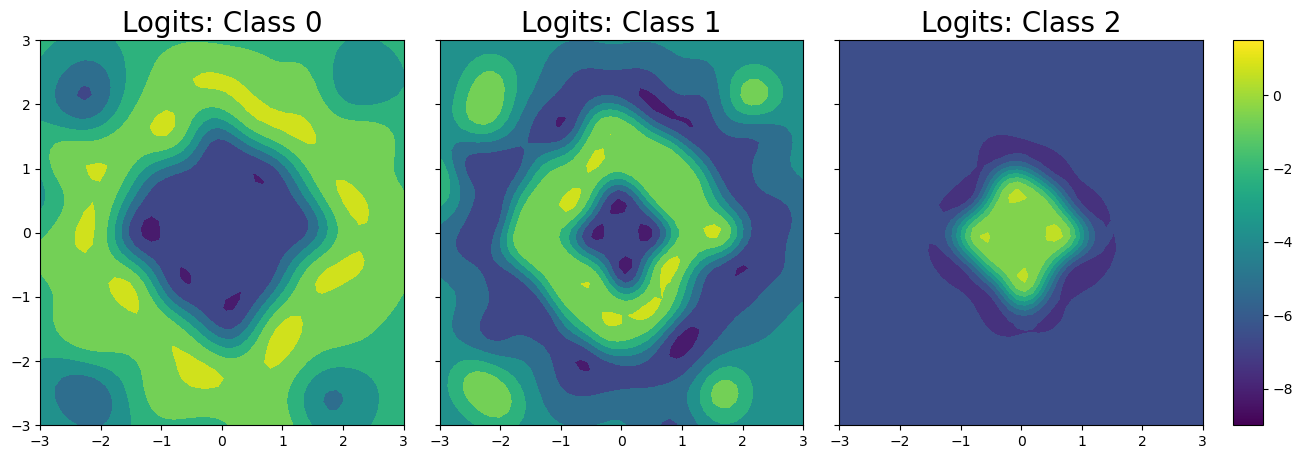}
\caption{GP classification.}
\label{fig:cls_GP_logits}
\end{subfigure}
\begin{subfigure}[b]{.495\textwidth}
\includegraphics[width=1\textwidth,height=.3\textwidth]{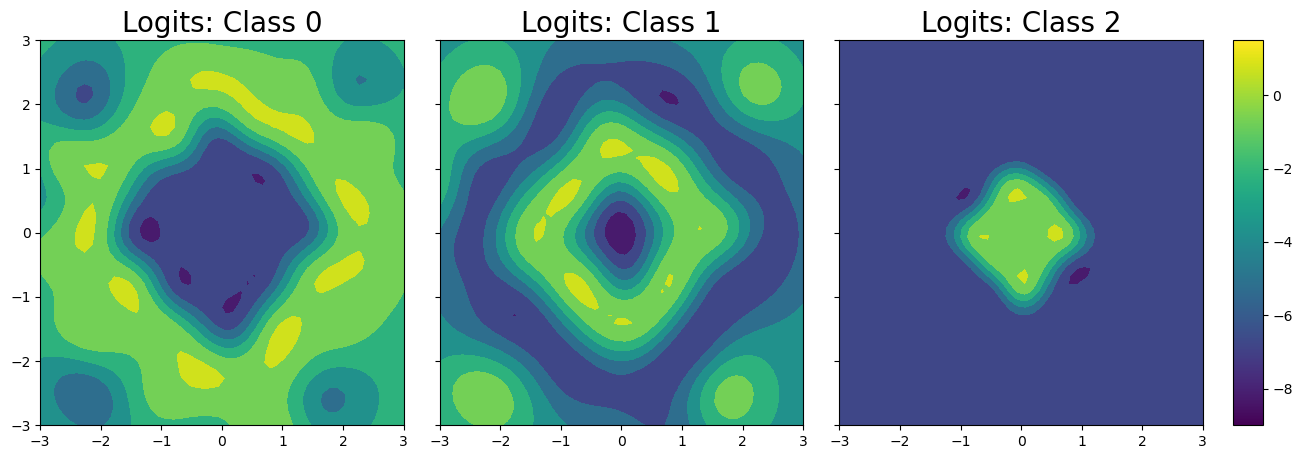}
\caption{Q-EP classification.}
\label{fig:cls_QEP_logits}
\end{subfigure}
\begin{subfigure}[b]{.495\textwidth}
\includegraphics[width=1\textwidth,height=.3\textwidth]{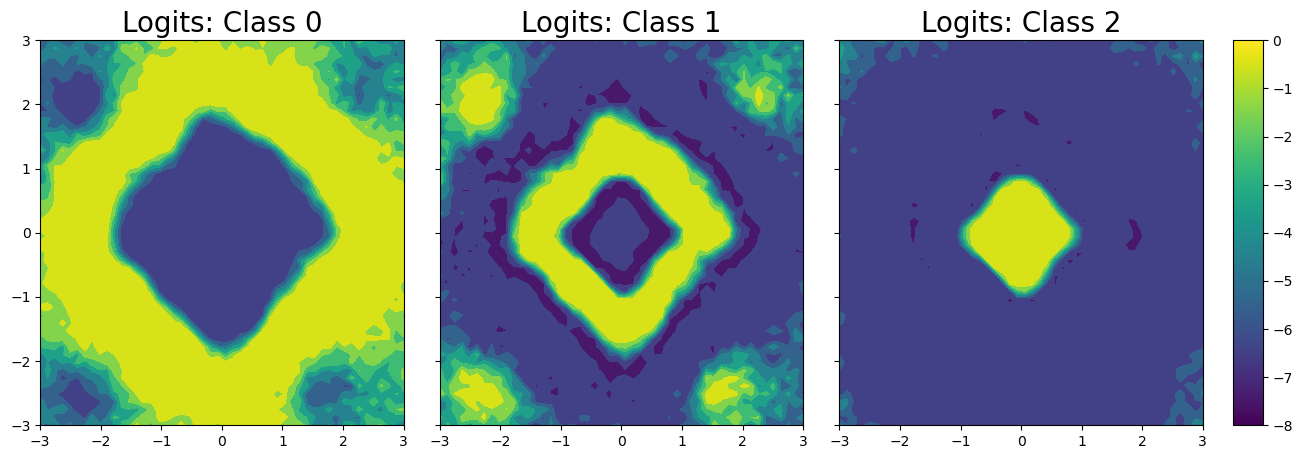}
\caption{Deep GP classification.}
\label{fig:cls_DGP_logits}
\end{subfigure}
\begin{subfigure}[b]{.495\textwidth}
\includegraphics[width=1\textwidth,height=.3\textwidth]{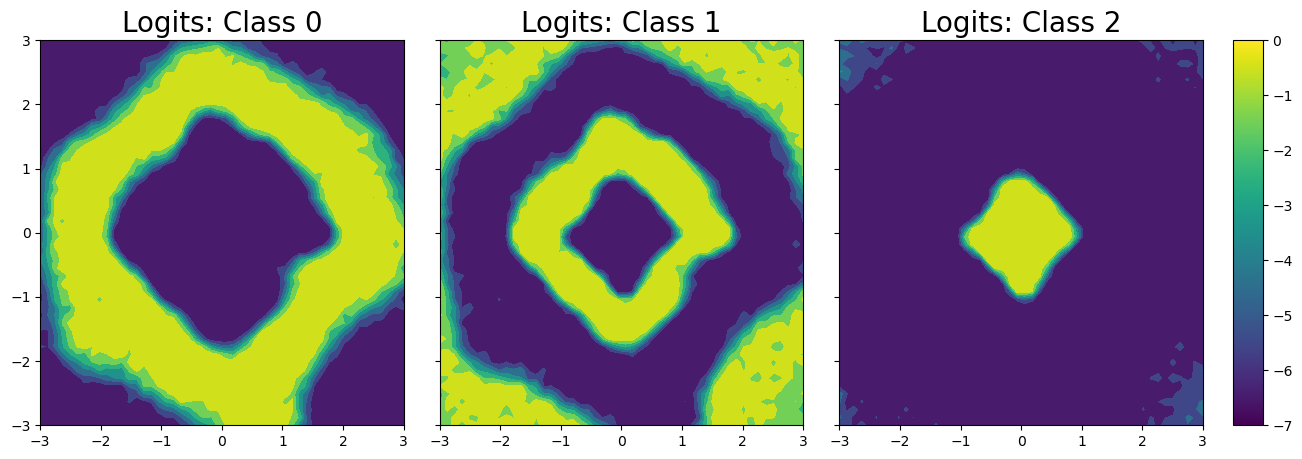}
\caption{Deep Q-EP classification.}
\label{fig:cls_DQEP_logits}
\end{subfigure}
\begin{subfigure}[b]{.495\textwidth}
\includegraphics[width=1\textwidth,height=.3\textwidth]{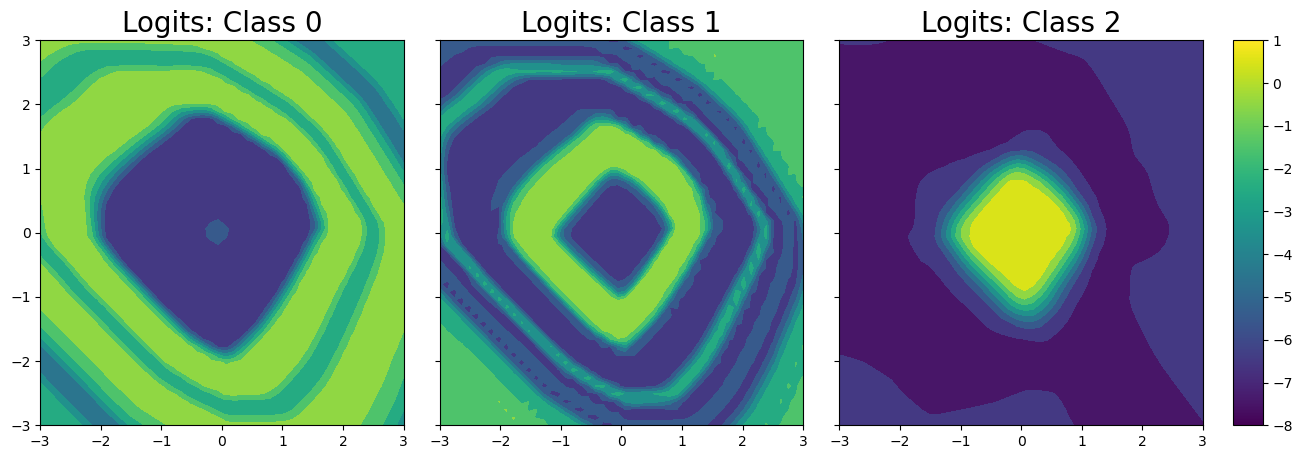}
\caption{DKL-GP classification.}
\label{fig:cls_DKLGP_logits}
\end{subfigure}
\begin{subfigure}[b]{.495\textwidth}
\includegraphics[width=1\textwidth,height=.3\textwidth]{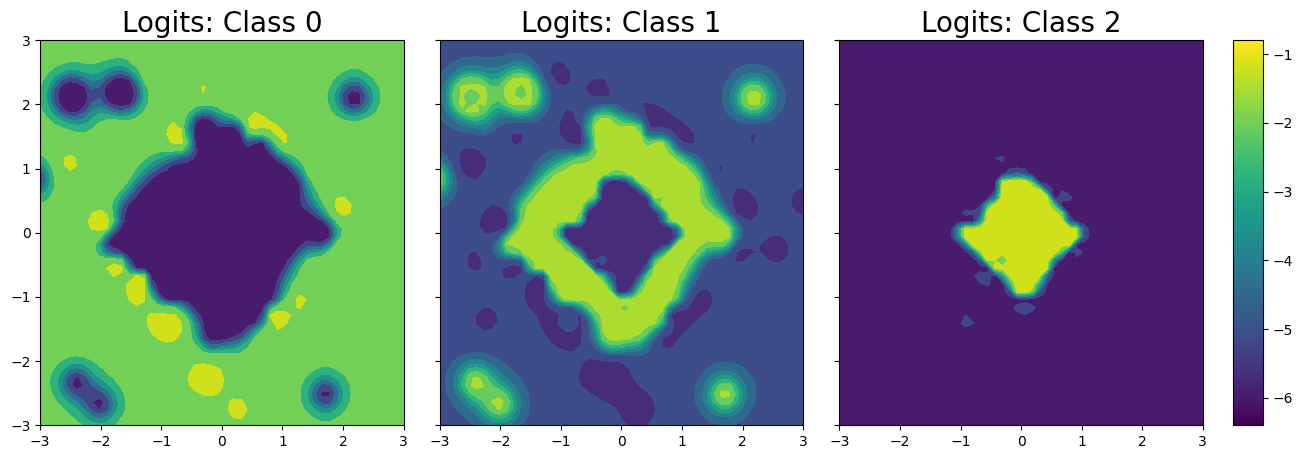}
\caption{DSPP classification.}
\label{fig:cls_DSPP_logits}
\end{subfigure}
\caption{Comparing Q-EP \eqref{fig:cls_QEP_logits} and deep Q-EP \eqref{fig:cls_DQEP_logits} with GP \eqref{fig:cls_GP_logits}, deep GP \eqref{fig:cls_DGP_logits}, DKL-GP \eqref{fig:cls_DKLGP_logits} and DSPP \eqref{fig:cls_DSPP_logits} on a classification problem defined on annular rhombus.}
\label{fig:cls_logits}
\end{figure}

\end{document}